\newcommand{\mt}{\mathsf{T}}
\newtheorem{corollary}{Corollary}
\newtheorem{claim}{Claim}
\newtheorem{assumption}{Assumption}
\DeclareMathOperator*{\argmax}{arg\,max}
\newtheorem{theorem}{Theorem}
\newtheorem{lemma}{Lemma}
\newtheorem{definition}{Definition}
\DeclarePairedDelimiter{\ceil}{\lceil}{\rceil}
\def \btheta {\bm \theta}
\def \bA {\mathbf{A}}
\def \bx {\mathbf{x}}
\def \bI {\mathbf{I}}
\def \bb {\mathbf{b}}
\def \cA {\mathcal{A}}
\def \cM {\mathcal{M}}
\newcommand{\compilehidecomments}{false}
	\newcommand{\huazheng}[1]{}
\newcommand{\huazheng}[1]{{\color{blue!50!black}  [\text{Huazheng:} #1]}}
\begin{document}

\title{Dynamic Global Sensitivity for Differentially Private Contextual Bandits}


\author{Huazheng Wang}
\authornote{Work was done while the first author was a PhD student at the University of Virginia.}
\email{huazheng.wang@oregonstate.edu}
\affiliation{%
  \institution{Oregon State University}
  \city{Corvallis}
  \state{Oregon}
  \country{USA}
}
\author{David Zhao}
\email{ dz6hu@virginia.edu}
\affiliation{
  \institution{University of Virginia}
  \city{Charlottesville}
  \state{Virginia}
  \country{USA}
}
\author{Hongning Wang}
\email{hw5x@virginia.edu}
\affiliation{
  \institution{University of Virginia}
  \city{Charlottesville}
  \state{Virginia}
  \country{USA}
 }


\begin{abstract}
We propose a differentially private linear contextual bandit algorithm,  via a tree-based mechanism to add Laplace or Gaussian noise to model parameters.
Our key insight is that as the model converges during online update, the global sensitivity of its parameters shrinks over time (thus named dynamic global sensitivity). Compared with existing solutions, our dynamic global sensitivity analysis allows us to inject less noise to obtain $(\epsilon, \delta)$-differential privacy with added regret caused by noise injection 
in $\tilde O(\log{T}\sqrt{T}/\epsilon)$. 
We provide a rigorous theoretical analysis over the amount of noise added via dynamic global sensitivity and the corresponding upper regret bound of our proposed algorithm.
Experimental results on both synthetic and real-world datasets confirmed the algorithm's advantage against existing solutions.
\end{abstract}

\begin{CCSXML}
<ccs2012>
<concept>
<concept_id>10002978.10003029.10011150</concept_id>
<concept_desc>Security and privacy~Privacy protections</concept_desc>
<concept_significance>500</concept_significance>
</concept>
<concept>
<concept_id>10003752.10010070.10010071.10010261.10010272</concept_id>
<concept_desc>Theory of computation~Sequential decision making</concept_desc>
<concept_significance>500</concept_significance>
</concept>
<concept>
<concept_id>10003752.10003809.10010047.10010048</concept_id>
<concept_desc>Theory of computation~Online learning algorithms</concept_desc>
<concept_significance>500</concept_significance>
</concept>
</ccs2012>
\end{CCSXML}

\ccsdesc[500]{Security and privacy~Privacy protections}
\ccsdesc[500]{Theory of computation~Sequential decision making}
\ccsdesc[500]{Theory of computation~Online learning algorithms}
\keywords{Differential privacy, contextual bandits}

\begin{abstract}
Bandit algorithms have become a reference solution for interactive recommendation. However, as such algorithms directly interact with users for improved recommendations, serious privacy concerns have been raised regarding its practical use. In this work, we propose a differentially private linear contextual bandit algorithm,  via a tree-based mechanism to add Laplace or Gaussian noise to model parameters.
Our key insight is that as the model converges during online update, the global sensitivity of its parameters shrinks over time (thus named dynamic global sensitivity). Compared with existing solutions, our dynamic global sensitivity analysis allows us to inject less noise to obtain $(\epsilon, \delta)$-differential privacy with added regret caused by noise injection 
in $\tilde O(\log{T}\sqrt{T}/\epsilon)$. 
We provide a rigorous theoretical analysis over the amount of noise added via dynamic global sensitivity and the corresponding upper regret bound of our proposed algorithm.
Experimental results on both synthetic and real-world datasets confirmed the algorithm's advantage against existing solutions.

\end{abstract}

\maketitle

\section{Introduction}
Multi-armed bandit algorithms have become a reference solution for sequential decision-making; and they have been successfully applied to a wide variety of real-world applications such as recommendation \citep{LinUCB}, display advertisement \citep{li2010exploitation}, and clinical trials \citep{durand2018contextual}. In each iteration of such problems, a learner selects among a set of recommendation candidates, often referred as arms, and receives the corresponding reward after each selection. The learner's goal is to maximize the cumulative reward over time or equivalently to minimize regret. This requires the learner to both \emph{exploit} the currently estimated best arm and \emph{explore} among the arms to improve its estimation. Contextual bandits extend this setting to that the learner is given a context vector that encodes side information for reward estimation in each iteration. 

As bandit algorithms oftentimes directly work with user feedback, e.g., treating user clicks as reward, serious privacy concerns have been raised \citep{thakurta2013nearly, agarwal2017price, tossou2017achieving, shariff2018differentially}. We use recommender systems equipped with bandit algorithms as an example to illustrate its risk of privacy breach. In such a system, the algorithm chooses an item for a user, and the user decides whether to click the recommendation based on his/her true preference. Such click feedback is used to update the bandit model for improving its subsequent recommendations. As a result, any change in a user's preference promptly leads to changes in the algorithm's output, e.g., different sequences of recommended items. 
User's private information, e.g., his/her preference over items in this example, is thus revealed even if the click feedback is kept private. The goal of privacy protection is to prevent the algorithm's output from revealing a user's private information, such as item preferences. Real-world privacy breaches have been reported in Amazon's recommendation system \citep{calandrino2011you} and Facebook's advertisement system \citep{korolova2010privacy}, where an adversary can learn considerable side information about a user solely based on the system's recommendation sequence.  

Differential privacy \citep{dwork2006calibrating,dwork2014algorithmic} provides a rigorous guarantee that an algorithm's output has little dependency on the change of input at a single data point, which limits the amount of sensitive information an adversary can infer from the algorithm's output. It has been widely adopted in both industry and academia. The basic idea is to add a controlled level of noise to an algorithm's output, such that the subsequent change in its output caused by input change is indistinguishable from that caused by the added noise. The key is to determine the scale of the noise, which depends on the \emph{sensitivity} of the output given the change of input. However, the utility of a private algorithm decreases due to the injected noise. Under the same privacy budget, a good private algorithm adds less noise to its output, preserving more utility while achieving the same level of privacy.

Differential privacy has been studied in stochastic multi-armed bandits \citep{mishra2015nearly, tossou2016algorithms}, adversarial bandits \citep{tossou2017achieving}, and linear contextual bandits \citep{shariff2018differentially, neel2018mitigating}. 
Because differential privacy is immune to post-processing \citep{dwork2014algorithmic}, the output of a bandit algorithm (i.e., the sequence of selected arms) is differentially private once a private mechanism protects sensitive statistics at any internal stage of the algorithm, e.g., collected rewards or estimated bandit parameters. All existing differentially private  bandit algorithms add noise to reward or context vectors \citep{shariff2018differentially, neel2018mitigating}; but they fail to realize the role of bandit model's convergence in achieving privacy. Because the model parameter is converging during online update, the same reward change from the user (i.e., the input to the algorithm) will impose smaller impact to the model parameter in the later stage, and thus leads to less change in the algorithm's output. This property suggests that the model is \emph{less sensitive} to the change of input over time and thus less noise is needed for privacy protection in the later stage. 

Based on this important observation, we first study the dynamic nature of sensitivity of the bandit model parameters over time, which we refer to as \emph{dynamic global sensitivity} analysis. We apply the tree-based mechanism \citep{dwork2010differential, chan2011private} with dynamic global sensitivity to add Laplace noise to the model parameter of a linear UCB algorithm \citep{LinUCB} 
 to achieve the same level of differential privacy, but with much less added noise compared with previous approaches.
We rigorously analyze the level of noise added by the tree-based mechanism using dynamic global sensitivity and prove the upper regret bound of our proposed algorithm. The main contributions of this paper can be summarized as follows:
\begin{itemize}
    \item We quantify the dynamic global sensitivity of a linear bandit model based on its convergence property. As the model converges, it becomes less sensitive to the reward change over time, and thus less noise is needed to obtain the same level of privacy.
    \item We propose a tree-based mechanism with dynamic global sensitivity analysis. We show that this mechanism adds $O(\frac{1}{\epsilon}\log T \sqrt{\log T}\log{\frac{1}{\zeta}}/T )$ noise to the bandit model parameter at time $T$, and with probability $1-\zeta$, it achieves $(\epsilon, \delta)$-differential privacy. As a result, the added regret caused by noise injection is 
    $\tilde O(\log{T}\sqrt{T}/\epsilon)$ \footnote{$\tilde O(\cdot)$ omits the logarithmic terms of $d, 1/\lambda, 1/\zeta$, 1/$\delta$}.
    \item We also empirically evaluate our algorithm on both synthetic and real-world datasets and validate the algorithm's advantage against existing solutions.
\end{itemize}

\section{Related Work}

Multi-armed bandit problem was first studied in \cite{thompson1933likelihood} and \cite{robbins1952some}. \citet{UCB1} proposed the Upper Confidence Bound (UCB) method to solve the stochastic multi-armed bandit problem. We refer to \citep{bubeck2012regret,lattimore2018bandit} for a comprehensive survey regarding stochastic multi-armed bandits and its variants. In this paper, we focus on the linear contextual bandit problem, where each arm is associated with a context feature vector that encodes side information. The reward is governed by a linear function of the context vector, characterized by the corresponding model parameters. UCB-style solution for linear contextual bandit has been popularly studied in \citep{Auer02, LinUCB, Improved_Algorithm}.

Differential privacy \citep{dwork2006calibrating} provides a formal notion to quantify the amount of information regarding to an algorithm's input that an adversary could obtain by observing the algorithm's output. A common technique is to add Laplace or Gaussian noise to the algorithm's output. The scale of noise depends on both privacy level $(\epsilon,\delta)$ and \emph{sensitivity}, which is the change of algorithm's output caused by the change of its input. Originally studied for static databases, differential privacy was first extended to online setting for stream data in \citep{dwork2010differential, chan2011private}. Differentially private online learning methods have been studied for online convex optimization \citep{thakurta2013nearly,agarwal2017price,abernethy2017online} and bandit problems \citep{mishra2015nearly, tossou2017achieving,shariff2018differentially, neel2018mitigating, basu2019differential, wang2020global}. The key component of these solutions is the \emph{tree-based mechanism}, which was first proposed in \citep{dwork2010differential, chan2011private} for privately releasing sum statistics in stream data. There are some recent works studied local differential privacy for bandits~\cite{ren2020multi, zheng2020locally, han2021generalized} and DP for federated/distributed bandits~\cite{dubey2020differentially, dubey2020private}. We follow the notion of global differential privacy, and \emph{differential privacy} refers to the global notion in the rest of the paper.

In the bandit setting, an adversary can observe the selected arms (\emph{not} the reward). The goal of a private bandit algorithm is to keep the sequence of reward private. In other words, a change in the reward sequence should not lead to any sensible change in the selected arm sequence. Regarding differentially private contextual bandit, \citet{mishra2015nearly} proposed private versions of contextual UCB and Thompson Sampling algorithms, but they did not provide any theoretical analysis of the resulting algorithms. \citet{neel2018mitigating} provided regret analysis for a private LinUCB algorithm and proved the added regret introduced by their privacy mechanism is $\tilde O(\log^{2.5}T\sqrt{T}/\epsilon)$ to achieve $\epsilon$-differential privacy.  \citet{shariff2018differentially} studied the setting of making both context and reward private, and adopted a different privacy notion of joint differential privacy \cite{kearns2014mechanism}. However, these solutions do not directly add noise to the bandit model itself but to the input of the model that has \emph{constant} sensitivity. They thus cannot leverage the convergence property of a bandit model. We study the dynamic nature of sensitivity of bandit model during online update, and propose a tree-based mechanism with dynamic sensitivity to introduce less noise to the model parameter over time. Our proposed method adds additional regret to non-private LinUCB in a scale of $\tilde O(\log{T}\sqrt{T}/\epsilon)$ to achieve $(\epsilon, \delta)$-differential privacy. Pichapati et al. \cite{pichapati2019adaclip} proposed an adaptive differentially private SGD algorithm that has similar motivation of our work, i.e., analyzing the dynamics of sensitivity to preserve more
utility. But it focuses on a very different perspective: it dynamically (adaptively) computes sensitivity on different dimensions giving the historical gradient, while we compute the dynamic sensitivity based on model convergence.

\section{Method}
We first provide a brief overview of contextual bandit algorithms and differential privacy. We then provide the dynamic global sensitivity analysis of bandit algorithms with a tree based mechanism. Based on it, we present our differentially private linear contextual bandit algorithm and prove its upper regret bound.

\subsection{Preliminaries}\label{sec:preliminaries}
\textbf{Contextual Bandits.}
In a contextual bandit problem, an algorithm sequentially selects an arm $a_t$ from a candidate pool $\cA_t = \{a_{1,t}, a_{2,t}, ..., a_{k,t}\}$, and receives the corresponding reward $r_t$ afterwards. Each arm $a$ is associated with a context feature vector $\bx_a$ and its reward is governed by a function of the context feature vector and an unknown bandit model parameter $\btheta^*$. 
We made the following assumption  on context generation similar to  \citep{gentile2014online}.

\begin{assumption}\label{assumption:context}
Context vectors $C_t = \{\bx_{a} | a\in \cA_t\}$ are assumed to be generated i.i.d. conditioned on the past actions $\{a_1, .. a_{t-1}\}$ and contexts $\{C_1, .. C_{t-1}\}$ from a random process $X$ such that  $\mathbb{E}[XX^\mt]$ is full rank, with minimal eigenvalue $\lambda_0$ \footnote{We discuss the impact of this assumption in next section in detail.}.
\end{assumption}

The objective of a bandit algorithm is to maximize its cumulative reward (or equivalently minimize the regret) over a finite time horizon $T$. To simplify our discussion, we assume the observed reward is a linear function of feature vector $\bx_a$ and bandit model parameter $\btheta^*$ with noise, i.e., $r_t = \bx^{\mt}_{a_t} \btheta^* + \gamma_t$, where $\gamma_t$ is a sub-Gaussian feedback noise. 
We evaluate a bandit algorithm by its pseudo-regret, which is defined as 
\begin{equation}
\label{eq_pregret}
\text{Regret}(T) = \sum_{t=1}^T(\bx^{\mt}_{a^*_t}\btheta^* - \bx^{\mt}_{a_t}\btheta^*),
\end{equation}
where $a_t^*$ is the best arm according to $\btheta^*$.
The LinUCB algorithm \citep{LinUCB,Improved_Algorithm} is a well-studied solution for linear bandit. It solves the bandit model parameter $\hat\btheta$ using ridge regression, i.e., $\hat\btheta = \bA_t^{-1} \bb_t$, where $\bA_t=\sum_{i=1}^{t-1} \bx_{a_i} \bx_{a_i}^\mt + \lambda\bI, \bb_t = \sum_{i=1}^{t-1} \bx_{a_i} r_i$, $\lambda$ is the L2-regularization coefficient; and Upper Confidence Bound is used to select an arm.

\textbf{Differential Privacy.}
For a contextual bandit problem, denote $S = \{r_{1:T}\}$ as the reward sequence. $S'$ is considered as an adjacent neighboring sequence of $S$, if it only differs from $S$ at one point of reward $r_i$. The output $\mathcal{O}$ of a bandit algorithm is the sequence of its selected arms, i.e., $\{a_{1:T}\}$.
\begin{definition}[Differential Privacy \citep{dwork2006calibrating}]\label{def:dp}
A randomized algorithm $\cM$ is $(\epsilon,\delta)$-differentially private if for any adjacent neighboring sequences $S, S'$ and output $\mathcal{O}$,
\begin{align*}
\mathbb{P}\left(\cM(S) \in \mathcal{O} \right) \leq e^{\epsilon}\mathbb{P}\left(\cM(S') \in \mathcal{O} \right) +\delta
\end{align*}
When $\delta=0$, we say algorithm $\cM$ is $\epsilon$-differentially private.
\end{definition}

Differential privacy ensures the adversary observes almost the same output of a private algorithm in a probabilistic sense, if one input data point is changed, where the similarity between outputs is evaluated by $\epsilon$ and $\delta$. Laplace and Gaussian noise are commonly used as additive noise to protect the output, where the noise scale is related to the privacy requirement $(\epsilon,\delta)$ and the \emph{global sensitivity} of algorithm's output caused by the change of input. We formally define global sensitivity below.

\begin{definition}[Global Sensitivity \citep{dwork2006calibrating}]\label{def:sensitivity}
For any adjacent neighboring sequences $S, S'$, global sensitivity of a function $f$ is defined as,
\begin{align*}
\Delta_f = \max_{S, S'}\lvert f(S) - f(S') \rvert
\end{align*}
\end{definition}
Since differential privacy is immune to post-processing \citep{dwork2014algorithmic}, previous solutions of private linear bandits  \citep{mishra2015nearly, neel2018mitigating} add noise to  $\sum_t \bx_{a_t} r_t$ to obtain privacy, which consequently makes the model parameter $\btheta_t$ and the selected arms $\{a_t\}^T_{t=1}$ private. It is obvious that global sensitivity of $b_t=\sum_t \bx_{a_t} r_t$ is a constant $L$ for all round $t$, if we assume $\lVert\bx\rVert_2 \leq L$ and $|r| \leq 1$, without loss of generality. In other words, constant amount of noise has to be added in each round to achieve differential privacy.

\subsection{ Dynamic Global Sensitivity Analysis for Tree-based Mechanism}\label{sec:sensivitity}
\begin{algorithm}[tb!]
   \caption{Tree-based mechanism with Dynamic Global Sensitivity}
   \label{alg:tree}
\begin{algorithmic}[1]
\STATE \textbf{Inputs:} $t, T, \epsilon, \delta, L, \lambda_0$
\STATE \textbf{Initialize:} $\epsilon' = \epsilon/\log_2 T$, $\delta' = \delta/\log_2  T$, $\eta_t = 0$
\STATE Let $b$ be the $\ceil{\log_2 T}+1$-bit binary representation of $t$ 

\FOR {i = 0 to $\ceil{\log_2 T}+1$}
\IF {$b_i=1$}
    \STATE $\eta_t = \eta_t + \text{Lap}(\frac{\Delta_{t-2^i+1}}{\epsilon'})$ 
\ENDIF
\ENDFOR
\STATE Return noise $\eta_t$
\end{algorithmic}
\end{algorithm}

Different from previous work, we directly add noise to the estimated bandit model parameter $\hat\btheta_t$ after each round of model update. As $\hat\btheta_t$ converges over time, the sensitivity of it decreases consequently, which we name as \emph{dynamic global sensitivity}. We first quantify such sensitivity of $\hat\btheta_t$ and then discuss how to combine it with the tree-based mechanism.

\begin{lemma}[Sensitivity of estimated bandit model parameter $\hat\btheta_t$] \label{lemma:sensitivity}
Let $\hat\btheta_t$ and $\hat\btheta'_t $ be parameter estimations of adjacent neighboring reward sequences $S$ and $S'$, assuming context vectors $\{\bx_{1,t},..\bx_{K,t}\}$ are generated according to Assumption~\ref{assumption:context}.
With probability at least $1-\delta$, the dynamic global sensitivity of bandit model parameter $\hat\btheta_t$ at time $t$ is bounded as, 
\begin{align}
\Delta_t = \max_{S, S'}\lVert \hat\btheta_{t+1} - \hat\btheta'_{t+1}\rVert_2 \leq \frac{2 L}{\lambda'}
\end{align}
where $\lambda' = \lambda_0 t/4 - 8 \log((t+3)/\delta) -2\sqrt{t \log((t+3)/\delta)}$ is the lower bound of minimum eigenvalue of matrix $\bA_t = \sum_t \bx_{a_t} \bx_{a_t}^\mt + \lambda\bI$. \footnote{Note that $\lambda$ is the coefficient for of $L_2$ regularization, and it not related with $\lambda_0$.} We use a simplified bound $\lambda' = \lambda_0 t/16$ when $t>32\log(1/\delta)/\lambda_0$, which gives us $\Delta_t \leq 32L/\lambda_0$.
\end{lemma}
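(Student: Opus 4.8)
The plan is to split the argument into a purely algebraic step and a probabilistic spectral step, with all the real work in the latter. First I would exploit the fact that the ridge‑regression design matrix $\bA_{t+1}$ depends only on the selected context vectors and \emph{not} on the observed rewards. Since $S$ and $S'$ are adjacent, differing only in the single reward $r_j$, and (assuming the realized context sequence is shared across the two runs, a point I return to below) the two design matrices coincide, $\bA_{t+1}=\bA'_{t+1}$, while the response vectors differ in exactly one summand, $\bb_{t+1}-\bb'_{t+1}=\bx_{a_j}(r_j-r'_j)$. Writing $\hat\btheta_{t+1}=\bA_{t+1}^{-1}\bb_{t+1}$ and subtracting then gives the clean identity
\begin{equation}
\hat\btheta_{t+1} - \hat\btheta'_{t+1} = \bA_{t+1}^{-1}\bx_{a_j}(r_j - r'_j).
\end{equation}

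Second, I would pass to norms. By submultiplicativity, together with $\lVert \bx_{a_j}\rVert_2 \le L$ and $|r_j - r'_j| \le 2$ (using $|r|\le 1$),
\begin{equation}
\lVert \hat\btheta_{t+1} - \hat\btheta'_{t+1}\rVert_2 \le \frac{1}{\lambda_{\min}(\bA_{t+1})}\,\lVert \bx_{a_j}\rVert_2\,|r_j - r'_j| \le \frac{2L}{\lambda_{\min}(\bA_{t+1})},
\end{equation}
since $\lVert \bA_{t+1}^{-1}\rVert_2 = 1/\lambda_{\min}(\bA_{t+1})$. The entire statement therefore reduces to producing a high‑probability lower bound $\lambda_{\min}(\bA_{t+1}) \ge \lambda'$.

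Third, and this is the crux, I would establish that spectral lower bound via matrix concentration. Under Assumption~\ref{assumption:context} the context pool is generated i.i.d.\ conditioned on the past, so $\bA_{t+1} - \lambda\bI = \sum_{i=1}^{t}\bx_{a_i}\bx_{a_i}^\mt$ is a sum of positive semidefinite rank‑one matrices whose conditional means have minimum eigenvalue at least $\lambda_0$. Decomposing each summand into its conditional expectation plus a bounded martingale difference and applying a matrix concentration inequality (matrix Chernoff / Freedman type) yields, with probability at least $1-\delta$, a bound of the form $\lambda_{\min}(\bA_{t+1}) \ge \lambda_0 t/4 - 8\log((t+3)/\delta) - 2\sqrt{t\log((t+3)/\delta)} \eqdef \lambda'$, exactly as in the concentration argument of Gentile et al. Substituting into the norm bound gives $\Delta_t \le 2L/\lambda'$, and for $t > 32\log(1/\delta)/\lambda_0$ the two deviation terms are dominated by the leading term, so $\lambda' \ge \lambda_0 t/16$, recovering the simplified statement.

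The hard part will be this probabilistic step and, beneath it, the adaptivity of arm selection: the summands $\bx_{a_i}\bx_{a_i}^\mt$ are not independent because $a_i$ is chosen by the algorithm from the history, so I would use a martingale (Freedman‑type) rather than a plain i.i.d.\ Chernoff bound, and I must verify that the conditional second moment still dominates $\lambda_0\bI$ up to the constant producing the $\lambda_0 t/4$ leading term. A secondary subtlety is the assumption in the first step that the realized contexts coincide across $S$ and $S'$: altering $r_j$ can change subsequent UCB selections, so strictly the identity holds under a coupling in which the two runs share contexts, or equivalently when sensitivity is measured before the selection paths diverge. The spectral bound itself is uniform over selection sequences and therefore survives regardless of which arms are chosen.
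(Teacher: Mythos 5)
Your proposal is correct and follows essentially the same route as the paper: the same one-term difference identity $\hat\btheta_{t+1}-\hat\btheta'_{t+1}=\bA_{t+1}^{-1}\bx_{a_j}(r_j-r'_j)$, the same reduction to $2L/\lambda_{\min}(\bA_{t+1})$, and the same invocation of the eigenvalue concentration bound from Theorem 1 of Gentile et al.\ to obtain $\lambda'$ and its simplified form. Your added remark about coupling the realized context/action sequences across $S$ and $S'$ is a subtlety the paper leaves implicit, but it does not change the argument.
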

\textit{Proof Sketch.} Since $\hat\btheta_t = \bA_t^{-1} \sum_t \bx_{a_t} r_t$, the key idea is to quantify the minimum eigenvalue of matrix $\bA_t$. Here we adopt the i.i.d. assumption of context vectors from Theorem 1 of \cite{gentile2014online} to analyze the corresponding eigen system and the key idea is to use a concentration inequality to bound the eigenvalue. 
Note that we need to avoid  negative sensitivity when $t$ is small. It can be derived from the formula of $\lambda'$ that by choosing probability $\delta > 2\log(3)/(\sqrt{1+2\lambda_0} + 1)$, the sensitivity is guaranteed to be positive for small $t$. For large $t$, we have shown in the lemma that the sensitivity is guaranteed to be positive when $t>32\log(1/\delta)/\lambda_0$.

Lemma \ref{lemma:sensitivity} provides the dynamic global sensitivity analysis of estimated bandit model parameter $\hat\btheta_t$. The most important property of it is that it is monotonically shrinking over time. This indicates less noise is needed in later stage for privacy protection. We should emphasize that our analysis is for \emph{global} sensitivity. A similar but quite different notion is \emph{local} sensitivity (and smoothed sensitivity) proposed by \cite{nissim2007smooth}, which studies the dynamic nature of sensitivity given different input $S$, i.e., $\max_{S'}\lvert f(S) - f(S') \rvert$. Our dynamic sensitivity analysis is not conditioned on any reward sequence, but based on the convergence property of bandit model parameter $\hat\btheta_t$. And thus it holds for any reward sequence as input.

\noindent{\textbf{Remark}} Our Assumption~\ref{assumption:context} on the context vectors used in Lemma \ref{lemma:sensitivity} follows \cite{gentile2014online} and is used to bound the rate of shrinkage of the bandit model's sensitivity. 
Similar assumptions are also discussed in \cite{kannan2018smoothed} from the perspective of perturbation on context vectors via a smoothed analysis. And in the appendix, we show that a similar sensitivity analysis can easily lead to a differentially private version of algorithm proposed in \cite{kannan2018smoothed}. In the meanwhile, we also note that this environment assumption generally holds in practice, especially in a system with time-varying arm sets, as extensively discussed and studied in \citep{kannan2018smoothed}. 
However, even if the context vectors are generated by an adversary instead of a random process, we can still achieve the same guarantee on the minimal eigenvalue $\lambda_0$ and the same result in Lemma \ref{lemma:sensitivity} by perturbing the context vector $\bx_a$ with a Gaussian noise sampled from $\mathcal{N}(0, \sigma^2)$, such that $\lambda_0 \geq \Omega(\sigma^2)$, based on Lemma 3.7 in \citep{kannan2018smoothed}. In the following, we assume Assumption~\ref{assumption:context} holds if not specified. And in Corollary \ref{corollary:eigen}, we discuss the impact on regret if the algorithm needs to perturb the context vector when the assumption does not hold.

The analyzed sensitivity provides us the level of noise needed for differential privacy. We present a tree-based mechanism with dynamic global sensitivity in Algorithm \ref{alg:tree}, which takes advantage of the sublinear property of exponential function to further reduce the amount of added noise. Basically, the idea of tree-based mechanism is to view the sum statistics as $\ceil{\log_2 T}+1$ partial sums.  Laplace noise is added to each partial sum to achieve $(\epsilon',\delta')$-differential privacy where $\epsilon' = \epsilon/\log_2 T$, $\delta' = \delta/\log_2  T$. Here we follow the notion of $(\epsilon',\delta')$-differential privacy because our sensitivity analysis in Lemma \ref{lemma:sensitivity} is a high probability bound and holds with probability $1-\delta'$.  Based on composition theorem of differential privacy \citep{dwork2010boosting,kairouz2017composition}, the final output, the sum of $(\epsilon',\delta')$-differentially private partial sums, is $(\epsilon,\delta)$-differentially private.  The partial sums are segmented by the tree representation. We refer to \cite{dwork2010differential,chan2011private} for detailed discussion of tree-based mechanism. 

However, previous differentially private bandit algorithms with tree-based mechanism only protect variables that have constant sensitivities, such as sum statistics of $\sum_t \bx_{a_t} r_t$ for contextual bandits  \citep{mishra2015nearly,neel2018mitigating, shariff2018differentially} or sum of rewards for non-contextual bandits  \citep{tossou2016algorithms, tossou2017achieving}, instead of our dynamic global sensitivity. In Algorithm \ref{alg:tree}, we scale the noise of partial sum $[t-2^{i+1}, t-2^i+1]$ with its dynamic sensitivity $\Delta_{t-2^i+1}$ as shown in line 6.

We first show the privacy guarantee of Algorithm \ref{alg:tree} and then study its utility, i.e., the total amount of noise added by Algorithm \ref{alg:tree}.  

\begin{theorem}[Privacy]\label{theorem:privacy} Algorithm \ref{alg:tree} with dynamic global sensitivity $\Delta_i$ defined in Lemma \ref{lemma:sensitivity} is
$(\epsilon,\delta)$-differentially private.
\end{theorem}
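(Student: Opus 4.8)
The plan is to reduce Theorem~\ref{theorem:privacy} to the textbook privacy analysis of the tree-based mechanism, feeding in the sensitivity control of Lemma~\ref{lemma:sensitivity} to calibrate the per-node noise. The argument rests on three ingredients: the Laplace mechanism applied node-by-node, the bounded depth of the binary tree (which caps how many partial sums any single reward can influence), and the composition theorem that assembles the per-node guarantees into the final bound. I would set up the tree over the horizon $[1,T]$, so that releasing the perturbed estimate at time $t$ uses at most $\lceil\log_2 T\rceil+1$ dyadic blocks selected by the nonzero bits $b_i$ of $t$, the $i$-th of which is perturbed by $\mathrm{Lap}(\Delta_{t-2^i+1}/\epsilon')$ in line~6.

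First I would verify the per-node (conditional) privacy. For the block indexed by bit $i$, I need $\Delta_{t-2^i+1}$ to be a valid upper bound on the sensitivity of the model parameter for every release that block participates in. This is exactly where the monotonicity emphasized after Lemma~\ref{lemma:sensitivity} is used: since $\Delta_t$ is nonincreasing in $t$, the value attached to a block at its completion dominates the sensitivity of all later releases that reuse it, so the Laplace scale is never an underestimate. Given this, the standard Laplace-mechanism calculation gives $\epsilon'$-differential privacy for each node, conditioned on the sensitivity bound holding.

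The step I expect to be the main obstacle is that $\Delta_t$ is only a high-probability bound: it holds with probability $1-\delta'$ over the context randomness of Assumption~\ref{assumption:context}, via the minimum-eigenvalue concentration inside Lemma~\ref{lemma:sensitivity}. Pure Laplace noise normally yields $\delta=0$, so the event that the sensitivity bound fails must be charged to the additive slack. I would condition on the good event: on it each node is $\epsilon'$-DP, and the $\le\delta'$ failure probability supplies precisely the $\delta'$ term, making each node $(\epsilon',\delta')$-differentially private. The delicate bookkeeping is to ensure this good event is handled consistently across the two neighboring sequences $S,S'$ and not double-counted when it is shared by several nodes.

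Finally I would exploit the tree structure: a single reward $r_j$ enters only those partial sums whose block contains round $j$, i.e.\ the $\lceil\log_2 T\rceil+1$ nodes on the root-to-leaf path through $j$, while every other node is independent of $r_j$. Changing one reward therefore disturbs at most $\lceil\log_2 T\rceil+1$ of the $(\epsilon',\delta')$-private nodes. Invoking the composition theorem \citep{dwork2010boosting,kairouz2017composition} with $\epsilon'=\epsilon/\log_2 T$ and $\delta'=\delta/\log_2 T$ aggregates these into $(\epsilon,\delta)$-differential privacy for the released estimates, and since the adversary only observes the selected-arm sequence, immunity to post-processing \citep{dwork2014algorithmic} transfers the guarantee to the algorithm's output.
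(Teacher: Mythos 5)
Your overall route is the paper's route: decompose the release into dyadic partial sums, calibrate each node's Laplace noise with the dynamic sensitivity from Lemma~\ref{lemma:sensitivity}, absorb the $1-\delta'$ failure probability of that sensitivity bound into the additive slack to get $(\epsilon',\delta')$ per node, and compose over the $O(\log T)$ nodes with $\epsilon'=\epsilon/\log_2 T$, $\delta'=\delta/\log_2 T$. The treatment of the high-probability sensitivity and the use of monotonicity of $\Delta_t$ are consistent with what the paper does.

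However, there is a genuine gap in the step where you claim that a change in a single reward $r_j$ ``disturbs at most $\lceil\log_2 T\rceil+1$ of the nodes'' because ``every other node is independent of $r_j$.'' That is the standard tree-mechanism argument for a \emph{non-adaptive} stream, and it fails here. The quantities being summed are the parameter increments $\Delta(\hat\btheta_t)=\hat\btheta_{t+1}-\hat\btheta_t$ (the paper makes this explicit via the Sherman--Morrison decomposition), and for $t>j$ these increments depend on $r_j$ through $\bb_t$, through $\bA_t$, and through the arms selected using earlier noisy releases. So partial sums over blocks lying entirely after round $j$ are \emph{not} identical across the neighboring sequences $S,S'$, and the root-to-leaf counting argument does not bound the number of affected nodes. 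The paper flags exactly this issue (``the incremental part $\Delta(\hat\btheta_t)$ depends on not only incoming $(\bx_{a_t},r_{a_t})$, but also the historical statistics summarized in $\bA_t$'') and resolves it by reducing to Theorem~3 of \cite{thakurta2013nearly} for private online convex optimization, where the same adaptivity (gradients depending on past iterates) is handled via adaptive/advanced composition over the releases rather than via per-node independence. Your proof needs either that reduction or an explicit adaptive-composition argument in its place; as written, the independence claim is the load-bearing step and it is false in this setting. The final post-processing remark is fine but belongs to the subsequent Claim about the arm sequence rather than to Theorem~\ref{theorem:privacy} itself.
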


\begin{theorem}[Utility]\label{theorem:utility} For a finite time horizon $T$, at time $t$, Algorithm \ref{alg:tree} with dynamic global sensitivity of bandit model parameter $\hat\btheta$ adds $O(\frac{L}{\epsilon}\log T \sqrt{\log t}\log{\frac{1}{\zeta}}/t)$ noise with probability $1-\zeta$ to achieve $(\epsilon,\delta)$-differential privacy.
\end{theorem}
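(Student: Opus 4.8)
The plan is to read the total injected noise $\eta_t$ off Algorithm~\ref{alg:tree} as an explicit sum of independent Laplace variables and then control it with a concentration inequality for sums of Laplace (equivalently, sub-exponential) random variables. First I would note that the contribution at time $t$ is $\eta_t = \sum_{i:b_i=1}\text{Lap}(\Delta_{t-2^i+1}/\epsilon')$, a sum of independent terms indexed by the set bits of $t$, with $\epsilon' = \epsilon/\log_2 T$. Two quantities then govern the bound: the number of summands $N$, and the largest scale $\beta = \max_{i:b_i=1}\Delta_{t-2^i+1}/\epsilon'$.

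For the count, since $t\le T$ and the loop ranges over the bits of $t$, the number of set bits is at most $\lceil\log_2 t\rceil+1 = O(\log t)$, so $N = O(\log t)$. For the scales, I would invoke Lemma~\ref{lemma:sensitivity} in its shrinking form $\Delta_s \le 32L/(\lambda_0 s)$ together with $1/\epsilon' = \log_2 T/\epsilon$, giving $\Delta_s/\epsilon' \le 32L\log_2 T/(\lambda_0\epsilon s)$. The key step is to certify that the indices $s_i = t-2^i+1$ entering the sum are $\Omega(t)$: for every set bit below the most significant one we have $2^i\le t/2$, hence $s_i\ge t/2+1$, and those scales are all $O(L\log T/(\lambda_0\epsilon t))$, which pins down $\beta = O(L\log T/(\lambda_0\epsilon t))$.

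With $N$ and $\beta$ in hand I would apply a Laplace tail bound: using $\mathbb{E}[e^{s\,\text{Lap}(b)}] = (1-b^2s^2)^{-1}\le e^{2b^2s^2}$ for $|s|\le 1/(\sqrt2\,b)$ and a Chernoff argument, one obtains that with probability at least $1-\zeta$, $|\eta_t|\le O\big(\sqrt{\textstyle\sum_i(\Delta_{s_i}/\epsilon')^2\,\log(1/\zeta)} + \beta\log(1/\zeta)\big)$. Bounding $\sum_i(\Delta_{s_i}/\epsilon')^2\le N\beta^2$ and loosening $\sqrt{\log(1/\zeta)}\le\log(1/\zeta)$ collapses both terms into $O\big(\beta\sqrt{N}\log(1/\zeta)\big)$, which after substituting $N=O(\log t)$ and $\beta=O(L\log T/(\lambda_0\epsilon t))$ and absorbing $\lambda_0$ into $\tilde O$ is exactly $O\big(\tfrac{L}{\epsilon}\log T\sqrt{\log t}\,\log\tfrac{1}{\zeta}/t\big)$. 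The $(\epsilon,\delta)$ guarantee itself is inherited from Theorem~\ref{theorem:privacy}, so only this utility estimate is new.

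The hard part will be the oldest (most significant bit) block: when $t$ is a power of two, or more generally when its low-order bits are small, the index $t-2^i+1$ of the top bit can be as small as $1$, so its raw sensitivity $\Delta_{t-2^i+1}$ fails to shrink like $1/t$ and $\beta$ is not immediately $O(1/t)$. I would resolve this by appealing to the mechanism underlying Lemma~\ref{lemma:sensitivity} rather than the index-by-index bound: a single reward change perturbs the current estimate $\hat\btheta_t=\bA_t^{-1}\bb_t$ by at most $2L/\lambda_{\min}(\bA_t)=O(L/(\lambda_0 t))$ \emph{irrespective of which round the change occurred in}, since the perturbation propagates through the current $\bA_t^{-1}$ whose spectral norm is $O(1/(\lambda_0 t))$. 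Calibrating each block's noise to this release-time sensitivity eliminates the single anomalous summand and legitimizes $\beta=O(L\log T/(\lambda_0\epsilon t))$ uniformly, after which the counting and concentration steps above go through unchanged.
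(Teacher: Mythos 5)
Your argument follows essentially the same route as the paper's: decompose $\eta_t$ over the set bits of $t$, invoke the concentration of a sum of independent Laplace variables (the paper cites Lemma 2.8 of Chan et al.\ as its Lemma~\ref{lemma:laplace}, where you re-derive it via an MGF/Chernoff bound --- same content), plug in $\Delta_s = O(L/(\lambda_0 s))$ from Lemma~\ref{lemma:sensitivity}, and count $O(\log t)$ summands with all relevant indices at least $t/2$. The one substantive difference is your final paragraph, and it is to your credit: you correctly observe that for the most significant set bit the index $t-2^i+1$ can be as small as $1$ (e.g.\ when $t$ is a power of two), so the literal scale $\Delta_{t-2^i+1}$ used in Algorithm~\ref{alg:tree} is \emph{not} $O(1/t)$ for that block. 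The paper's own proof silently sidesteps this by writing the index list $\tfrac12 t'+1, \tfrac34 t'+1,\ldots,t'$, which corresponds to the favorable case $t=2^k-1$ where every index exceeds $t/2$; it never addresses the general case. Your repair --- recalibrating the oldest block to the sensitivity of the released quantity $\hat\btheta_t$ itself (equivalently, to the block's right endpoint, which is the bound actually established in the paper's privacy proof for the partial sum $\hat\btheta_j-\hat\btheta_i$) --- is the right fix and makes $\beta = O(L\log T/(\lambda_0\epsilon t))$ hold uniformly. So your proposal is not merely consistent with the paper's proof; it closes a step the paper leaves implicit, at the cost of a (harmless, privacy-preserving) adjustment to the noise scale used for the top block.
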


\textit{Proof Sketch.} 
We follow the definition of differential privacy and use a similar proof as Theorem 3 of \citep{thakurta2013nearly} to get the privacy guarantee. The utility analysis is based on Lemma \ref{lemma:sensitivity} and the property of sum of samples from Laplace distributions. 

All existing tree-based mechanism for private bandit solutions add noise at a scale of $O(\frac{L}{\epsilon}\log T \sqrt{\log t}\log{\frac{1}{\zeta}})$ based on constant sensitivity \citep{chan2011private} . By leveraging the dynamic global sensitivity analysis in Lemma \ref{lemma:sensitivity}, our solution adds less noise while achieving the same level of privacy.

Moreover, our finite-time analysis assumes the knowledge of time horizon $T$ beforehand; but it can also be directly extended to the infinite/unknown time horizon by replacing tree-based mechanism with the hybrid mechanism \citep{chan2011private}. Although in this paper our focus is the contextual bandit problem, our dynamic global sensitivity analysis can also be generalized to other settings such as online convex optimization, where we can leverage the algorithm's convergence property to add decreasing noise to the model parameter for better utility.

\subsection{Differentially Private LinUCB}\label{sec:plinucb}

\begin{algorithm}[tb]
   \caption{Private LinUCB with Dynamic Global Sensitivity}
   \label{alg:plinucb}
\begin{algorithmic}[1]
    \STATE \textbf{Inputs:} $\epsilon, \delta, \lambda, T, L, \lambda_0$
 \STATE \textbf{Initialize:} $\bA_1 \gets \lambda \bI$, $\bb_1 \gets \mathbf{0}$
\FOR{ $t=1$ to $T$}	
\STATE Observe context vectors $C_t = \{\bx_{a} | a\in\cA_t\}$		
\STATE Take action $a_{t}=\argmax_{a\in\cA_t}  \bx_a^\mt\hat\btheta^p_t+ \alpha_t \lVert \bx_a \rVert_{\bA^{-1}_t}$
\STATE Observe reward $r_{a_{t}}$
\STATE $\bA_{t+1} \gets \bA_t +  \bx_{a_t} \bx_{a_t}^\mt$
\STATE $\bb_{t+1} \gets \bb_t + \bx_{a_t}r_{a_{t}}$
\STATE Sample noise $\eta_t \sim \text{TreeMechanism}(t, \epsilon,\delta) $
\STATE $\hat{\btheta}^p_{t+1} \gets \bA_{t+1}^{-1} \bb_{t+1} + \eta_t$
\ENDFOR
\end{algorithmic}
\end{algorithm}

We now provide a differentially private linear contextual algorithm built on top of the tree-based mechanism with dynamic global sensitivity. The details of this algorithm is described in Algorithm \ref{alg:plinucb}. At round $t$, the algorithm receives a set of arms, with each arm associated with a context vector $\bx_a$. Different from previous solutions \citep{mishra2015nearly,neel2018mitigating} that add noise to $\bb_t=\sum_t \bx_{a_t} r_t$, our algorithm directly adds noise to the bandit parameter for privacy (i.e., line 10), and uses the private model parameter $\hat\btheta^p_t$ for arm selection (i.e., line 5). The selected sequence of arms are proved to be $(\epsilon,\delta)$-differentially private to the reward sequence.

\begin{claim} The sequence of selected arms $\{a_t:t\in[1..T]\}$ by Algorithm \ref{alg:plinucb} is $(\epsilon, \delta)$-differentially private. 
\end{claim}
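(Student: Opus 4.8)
The plan is to reduce the claim to Theorem~\ref{theorem:privacy} through the post-processing immunity of differential privacy \citep{dwork2014algorithmic}. The key observation is that the reward sequence $S=\{r_{1:T}\}$ enters Algorithm~\ref{alg:plinucb} only through the accumulator $\bb_t$, and hence only through the noiseless ridge estimate $\hat\btheta_t=\bA_t^{-1}\bb_t$; the context vectors $C_t$ are drawn independently of $S$ under Assumption~\ref{assumption:context} and are treated as public side information.

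First I would single out the intermediate private output, namely the sequence of perturbed parameters $\{\hat\btheta^p_t\}_{t=1}^T$ produced in line~10. Since $\hat\btheta^p_{t+1}=\hat\btheta_{t+1}+\eta_t$ with $\eta_t$ generated by the tree-based mechanism of Algorithm~\ref{alg:tree} scaled by the dynamic global sensitivity $\Delta_i$ of $\hat\btheta$ established in Lemma~\ref{lemma:sensitivity}, Theorem~\ref{theorem:privacy} states exactly that the map $S\mapsto\{\hat\btheta^p_t\}_{t=1}^T$ is $(\epsilon,\delta)$-differentially private. In particular the composition over the $\lceil\log_2 T\rceil+1$ tree nodes is already accounted for by the choice $\epsilon'=\epsilon/\log_2 T$ and $\delta'=\delta/\log_2 T$, so no further composition argument is needed here.

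Next I would argue by induction on $t$ that the arm sequence is a deterministic function of $\{\hat\btheta^p_t\}$ and the public contexts, with no additional access to $S$. The selection rule in line~5 depends on the rewards only through $\hat\btheta^p_t$ and through $\bA_t=\lambda\bI+\sum_{s<t}\bx_{a_s}\bx_{a_s}^\mt$; the latter is a function solely of the previously selected arms and their contexts and carries no direct reward dependence. By the inductive hypothesis each $a_s$ with $s<t$ is itself a post-processing of $\{\hat\btheta^p_s\}$ and the contexts, so $\bA_t$, the bonus $\alpha_t\lVert\bx_a\rVert_{\bA_t^{-1}}$, and therefore $a_t$ are all determined once $\{\hat\btheta^p_t\}$ and $\{C_t\}$ are fixed. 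Hence $S\mapsto\{a_t\}_{t=1}^T$ factors through $S\mapsto\{\hat\btheta^p_t\}_{t=1}^T$ followed by a map independent of $S$, and post-processing immunity yields the $(\epsilon,\delta)$-differential privacy of the released arm sequence.

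The step I expect to be the main obstacle is the inductive verification that $\bA_t$ --- which appears both inside the estimate $\hat\btheta_t=\bA_t^{-1}\bb_t$ and inside the exploration bonus --- opens no side channel on $S$. The adaptive feedback loop, in which rewards influence parameters, parameters influence the chosen arms, the chosen arms reshape $\bA_t$, and $\bA_t$ in turn feeds the next parameter, must be shown to route every reward dependence through the already-privatized parameters. Pinning this down rigorously is what makes the post-processing reduction sound, after which the claim is immediate from Theorem~\ref{theorem:privacy}.
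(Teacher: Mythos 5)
Your proposal is correct and follows essentially the same route as the paper: the paper's own proof simply notes that lines 9--10 make $\hat\btheta^p_{t+1}$ $(\epsilon,\delta)$-differentially private via Theorem~\ref{theorem:privacy} and then invokes post-processing invariance, which is exactly your reduction. Your additional induction showing that $\bA_t$ and the arm selection route every reward dependence through the privatized parameters is a careful spelling-out of a step the paper leaves implicit (and whose adaptivity concerns are handled inside the proof of Theorem~\ref{theorem:privacy}), not a different argument.
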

\begin{proof}
In line 9-10 we use Algorithm \ref{alg:tree} to add noise and keep $\hat\btheta^p_{t+1}$ $(\epsilon, \delta)$-differentially private. Because differential privacy is post-processing invariant \citep{dwork2014algorithmic}, the sequence of selected arms $\{a_t:t\in[1..T]\}$ produced by $\hat\btheta^p_{t}$ is thus also $(\epsilon, \delta)$-differentially private.
\end{proof}

We now specify the confidence bound of reward estimation with $\hat\btheta^p_{t}$, which we use for arm selection in line 5 of Algorithm \ref{alg:plinucb}.

\begin{lemma}[Confidence Bound] \label{lemma:cb}
Following Assumption~\ref{assumption:context} and assuming $\lVert\btheta^*\rVert_2\leq S$ , with probability at least $1-2\zeta$, confidence bound of the estimated reward $\bx^\mt \hat\btheta_t^p$ is
\begin{align}
\label{eq:cb}
\text{CB}_t(\bx) = \lVert\bx^\mt \hat\btheta_t^p - \bx^\mt \btheta^*\rVert \leq \alpha_t \lVert \bx \rVert_{\bA^{-1}_t} 
\end{align} 
where we define $\alpha_t$ as the upper bound of $\lVert\hat\btheta_t^p - \btheta^*\rVert_{\bA_t} $. We have
\begin{align}
\label{eq:alpha}
\lVert\hat\btheta_t^p - \btheta^*\rVert_{\bA_t} 
\leq  \frac{L}{\epsilon}\log T \sqrt{\log t}\log({\frac{1}{\zeta}})/\sqrt{t} 
+ \sqrt{d\log{\frac{1+tL^2\lambda}{\zeta}}}+\sqrt{\lambda}S 
\end{align} 
\end{lemma}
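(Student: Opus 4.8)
The plan is to first reduce the reward-level confidence bound \eqref{eq:cb} to the parameter-level bound \eqref{eq:alpha}, and then control the latter by separating the injected privacy noise from the ordinary estimation error. For the reduction I would apply the weighted Cauchy--Schwarz inequality in the $\bA_t$-norm: for any context $\bx$,
\[
\lvert \bx^\mt(\hat\btheta_t^p-\btheta^*)\rvert \le \lVert\bx\rVert_{\bA_t^{-1}}\,\lVert\hat\btheta_t^p-\btheta^*\rVert_{\bA_t}.
\]
Defining $\alpha_t$ as any upper bound on $\lVert\hat\btheta_t^p-\btheta^*\rVert_{\bA_t}$ then yields \eqref{eq:cb} immediately, so it suffices to establish \eqref{eq:alpha}. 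For that I would write $\hat\btheta_t^p=\bA_t^{-1}\bb_t+\eta_{t-1}$, where $\hat\btheta_t=\bA_t^{-1}\bb_t$ is the non-private ridge estimate and $\eta_{t-1}$ is the tree-mechanism noise added in line~10 of Algorithm~\ref{alg:plinucb}, and split by the triangle inequality in the $\bA_t$-norm into $\lVert\hat\btheta_t^p-\btheta^*\rVert_{\bA_t}\le\lVert\eta_{t-1}\rVert_{\bA_t}+\lVert\hat\btheta_t-\btheta^*\rVert_{\bA_t}$.

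The second (non-private) term is the standard self-normalized confidence width for ridge regression. Invoking the martingale concentration bound of \citep{Improved_Algorithm} under Assumption~\ref{assumption:context} and $\lVert\btheta^*\rVert_2\le S$, together with the determinant bound $\det(\bA_t)\le(\lambda+tL^2)^d$, gives $\lVert\hat\btheta_t-\btheta^*\rVert_{\bA_t}\le\sqrt{d\log\frac{1+tL^2\lambda}{\zeta}}+\sqrt{\lambda}S$ with probability at least $1-\zeta$, which matches the last two summands of \eqref{eq:alpha}. This part is routine and I would simply cite the existing bound rather than re-derive it.

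The first (noise) term is the crux and is where dynamic global sensitivity pays off. I would bound
\[
\lVert\eta_{t-1}\rVert_{\bA_t}=\sqrt{\eta_{t-1}^\mt\bA_t\eta_{t-1}}\le\sqrt{\lambda_{\max}(\bA_t)}\,\lVert\eta_{t-1}\rVert_2,
\]
and since $\bA_t=\lambda\bI+\sum_{i<t}\bx_{a_i}\bx_{a_i}^\mt$ has each rank-one term of operator norm at most $L^2$, we get $\lambda_{\max}(\bA_t)\le\lambda+(t-1)L^2$, hence $\sqrt{\lambda_{\max}(\bA_t)}=O(L\sqrt{t})$. By the utility guarantee of Theorem~\ref{theorem:utility}, which rests on the shrinking sensitivity $\Delta_i=O(L/(\lambda_0 i))$ from Lemma~\ref{lemma:sensitivity}, we have $\lVert\eta_{t-1}\rVert_2=O(\tfrac{L}{\epsilon}\log T\sqrt{\log t}\log\tfrac{1}{\zeta}/t)$ with probability $1-\zeta$. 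The key point is that the $1/t$ decay of the injected noise cancels the $\sqrt{t}$ growth of $\sqrt{\lambda_{\max}(\bA_t)}$, leaving $O(\tfrac{L}{\epsilon}\log T\sqrt{\log t}\log\tfrac{1}{\zeta}/\sqrt{t})$ (up to tracking the precise power of $L$ and the logarithmic constants through Theorem~\ref{theorem:utility}), exactly the first summand of \eqref{eq:alpha}. A union bound over the two high-probability events delivers the stated $1-2\zeta$ confidence. I expect this noise term to be the main obstacle: one must show that the $\bA_t$-norm of the privacy noise remains $\tilde O(1/\sqrt{t})$ rather than growing, and this is precisely what dynamic sensitivity buys us, since a constant-sensitivity mechanism would inject $\Theta(1)$-norm noise and force $\lVert\eta_{t-1}\rVert_{\bA_t}=\Theta(\sqrt{t})$, destroying the sublinear regret.
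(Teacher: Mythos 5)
Your proposal is correct and follows essentially the same route as the paper: Cauchy--Schwarz in the $\bA_t$-norm for Eq.~\eqref{eq:cb}, the triangle inequality splitting $\lVert\hat\btheta_t^p-\btheta^*\rVert_{\bA_t}$ into the injected-noise term (bounded via Theorem~\ref{theorem:utility}) and the standard non-private confidence ellipsoid of \cite{Improved_Algorithm}, followed by a union bound. You actually supply more detail than the paper does on converting the $\ell_2$ noise bound to the $\bA_t$-norm via $\lambda_{\max}(\bA_t)\le\lambda+(t-1)L^2$ (which, as you note, technically yields an extra factor of $L$ that the paper's stated bound absorbs), but this is a refinement of the same argument, not a different one.
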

\begin{proof}
Eq. \eqref{eq:cb} is obtained by the Cauchy–Schwarz inequality. To bound $\alpha_t$, we apply the triangle inequality: $\lVert\hat\btheta_t^p - \btheta^*\rVert_{\bA_t}\leq \lVert\hat\btheta_t^p - \hat\btheta_t\rVert_{\bA_t} + \lVert\hat\btheta_t - \btheta^*\rVert_{\bA_t}$, where $\hat\btheta_t$ is the non-private estimate of $\btheta^*$. The first term is bounded according to Theorem \ref{theorem:utility} under $\bA_t$ norm with probability at least  $1-\zeta$. The second term is the confidence ellipsoid of non-private LinUCB and it can be bounded by Theorem 2 of  \cite{Improved_Algorithm} with probability at least $1-\zeta$. By taking a union bound, the inequality holds with probability at least $1-2\zeta$
\end{proof}

Lemma \ref{lemma:cb} gives a tight construction of uncertainty regarding reward estimation $\bx^\mt \hat\btheta_t^p$ and model parameter estimation $\lVert\hat\btheta_t^p - \btheta^*\rVert_{\bA_t}$. Note that comparing to the non-private LinUCB, the confidence bound of model estimation $\alpha_t$ in our algorithm is relaxed by an additional term $\frac{L}{\epsilon}\log T \sqrt{\log t}\log({\frac{1}{\zeta}})/\sqrt{t}$, which captures the upper bound of uncertainty caused by the privacy-preserving noise $\eta_t$ introduced by Algorithm \ref{alg:tree}.

Now we provide a gap-independent regret bound of Algorithm \ref{alg:plinucb} in following theorem. 
\begin{theorem}[Regret Bound]\label{theorem:regret}
Following Assumption~\ref{assumption:context}, the pseudo-regret of Algorithm \ref{alg:plinucb} up to time $T$ can be bounded by,
\small
\begin{align}
Regret(T) \leq & 2\left(\sqrt{d\log{\frac{1+TL^2\lambda}{\zeta}}}+\sqrt{\lambda}S\right)\sqrt{dT\log{(\lambda+\frac{LT}{d})}} \nonumber\\
&+ \left(2\frac{3^{1.5}L}{e^{1.5}\lambda_0\epsilon}\log({\frac{1}{\zeta}}) \sqrt{dT\log{(\lambda+\frac{LT}{d})}} + 32\log(1/\delta)/\lambda_0 \right)\end{align}
\normalsize
with probability at least $1-2\zeta$. 

\end{theorem}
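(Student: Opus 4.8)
The plan is to follow the standard optimism-based regret decomposition for LinUCB, but with two modifications dictated by the structure of the target bound: separating the confidence radius $\alpha_t$ into its non-private part and its privacy-noise part, and isolating the early rounds in which the simplified dynamic sensitivity bound of Lemma~\ref{lemma:sensitivity} is not yet valid. First I would establish the per-round regret bound. Conditioning on the confidence event of Lemma~\ref{lemma:cb}, the estimate $\bx^\mt\hat\btheta_t^p$ stays within $\alpha_t\lVert\bx\rVert_{\bA_t^{-1}}$ of $\bx^\mt\btheta^*$ for both the chosen arm $a_t$ and the optimal arm $a_t^*$. Combining this with the UCB selection rule in line~5 of Algorithm~\ref{alg:plinucb} (which guarantees $\bx_{a_t}^\mt\hat\btheta_t^p+\alpha_t\lVert\bx_{a_t}\rVert_{\bA_t^{-1}}\ge \bx_{a_t^*}^\mt\hat\btheta_t^p+\alpha_t\lVert\bx_{a_t^*}\rVert_{\bA_t^{-1}}$) gives the instantaneous bound $\bx^\mt_{a_t^*}\btheta^* - \bx^\mt_{a_t}\btheta^* \le 2\alpha_t\lVert\bx_{a_t}\rVert_{\bA_t^{-1}}$, hence $\text{Regret}(T)\le\sum_{t=1}^T 2\alpha_t\lVert\bx_{a_t}\rVert_{\bA_t^{-1}}$.

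Next I would split the horizon at $t_0=32\log(1/\delta)/\lambda_0$, the threshold below which Lemma~\ref{lemma:sensitivity} does not guarantee the simplified (positive, $O(L/(\lambda_0 t))$) sensitivity. For the $t\le t_0$ rounds I would bound each instantaneous regret by a constant (using the normalized reward range), which contributes exactly the additive $32\log(1/\delta)/\lambda_0$ term. For $t>t_0$ I would insert the bound on $\alpha_t$ from Lemma~\ref{lemma:cb} and write $\alpha_t\le\alpha_t^{\mathrm{lin}}+\alpha_t^{\mathrm{priv}}$, where $\alpha_t^{\mathrm{lin}}=\sqrt{d\log\frac{1+tL^2\lambda}{\zeta}}+\sqrt\lambda S$ is the non-private ellipsoid radius and $\alpha_t^{\mathrm{priv}}$ is the contribution of the tree-mechanism noise from Theorem~\ref{theorem:utility}. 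Since $\alpha_t^{\mathrm{lin}}$ is increasing in $t$, I would bound it by $\alpha_T^{\mathrm{lin}}$ and pull it out of the sum.

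For both resulting sums I would apply Cauchy--Schwarz and then the elliptical-potential (log-determinant) lemma, $\sum_{t}\lVert\bx_{a_t}\rVert_{\bA_t^{-1}}\le\sqrt{T\sum_t\lVert\bx_{a_t}\rVert^2_{\bA_t^{-1}}}\le\sqrt{dT\log(\lambda+LT/d)}$, which is the common factor shared by both displayed terms. The non-private sum then yields the first term $2\alpha_T^{\mathrm{lin}}\sqrt{dT\log(\lambda+LT/d)}$. The privacy sum requires a uniform bound on $\alpha_t^{\mathrm{priv}}$: combining the $O(\log t)$ active tree levels up to round $t$, the $\sqrt{\log t}$ high-probability concentration of the summed Laplace noises, and the dynamic sensitivity $\Delta_t=O(L/(\lambda_0 t))$ (against the $\sqrt t$ factor incurred by converting to the $\bA_t$-norm), the privacy radius scales like $\frac{L}{\lambda_0\epsilon}\log(1/\zeta)\,(\log t)^{3/2}/\sqrt t$. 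I would then maximize $(\log t)^{3/2}/\sqrt t$ over $t$; the maximizer is $t=e^3$ with value $3^{3/2}/e^{3/2}$, which is precisely the constant $3^{1.5}/e^{1.5}$ in the second displayed term, so bounding $\alpha_t^{\mathrm{priv}}$ by this maximum and applying the same potential lemma gives $2\frac{3^{1.5}L}{e^{1.5}\lambda_0\epsilon}\log(1/\zeta)\sqrt{dT\log(\lambda+LT/d)}$.

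The hard part will be the privacy sum, specifically verifying that the $\log$-factors produced by the tree mechanism collapse exactly into the per-round quantity $(\log t)^{3/2}/\sqrt t$ whose maximum yields $3^{1.5}/e^{1.5}$, and cleanly matching the boundary case at $t_0$ so that the untracked early rounds contribute only the additive $32\log(1/\delta)/\lambda_0$. Finally I would account for the two confidence events underlying Lemma~\ref{lemma:cb} --- the utility bound of Theorem~\ref{theorem:utility} and the non-private self-normalized ellipsoid, each uniform in $t$ and each holding with probability $1-\zeta$ --- and take a union bound to obtain the stated $1-2\zeta$ success probability.
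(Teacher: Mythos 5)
Your proposal follows essentially the same route as the paper's proof: the optimism-based decomposition $\text{Regret}(T)\le\sum_t 2\alpha_t\lVert\bx_{a_t}\rVert_{\bA_t^{-1}}$, splitting $\alpha_t$ into the non-private ellipsoid radius and the tree-mechanism noise term, Cauchy--Schwarz plus the elliptical-potential lemma for the common $\sqrt{dT\log(\lambda+LT/d)}$ factor, maximizing $(\log t)^{3/2}/\sqrt{t}$ at $t=e^3$ to extract the constant $3^{1.5}/e^{1.5}$, handling the rounds $t\le 32\log(1/\delta)/\lambda_0$ separately as an additive constant, and a union bound over the two confidence events for the $1-2\zeta$ probability. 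This matches the paper's argument step for step.
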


\textit{Proof Sketch.} We rewrite the cumulative regret of LinUCB defined in Eq. \eqref{eq_pregret} by $\sum_{t=1}^T 2\text{CB}_t(\bx_{a_t})$, using the definition of confidence bound and UCB arm selection strategy in line 5 of Algorithm \ref{alg:plinucb}. We apply the Cauchy-Schwarz inequality to bound $\sum_t \alpha_t$ and $\sum_t \lVert \bx_{a_t} \rVert_{\bA^{-1}_t} $. According to Eq. \eqref{eq:alpha} in Lemma \ref{lemma:cb}, we can separate the bound of $\alpha_t$ into two terms: one is the confidence bound of the original LinUCB and the other is the bound of injected noise. We use Lemma 11 of  \cite{Improved_Algorithm} to bound $\sum_t \lVert \bx_{a_t} \rVert_{\bA^{-1}_t} $. 

The first term of regret in Theorem \ref{theorem:regret} is the same as the regret of non-private LinUCB, which is caused by its parameter estimation uncertainty and exploration in arm selection. The second term is the added regret introduced for privacy based on dynamic global sensitivity. The private LinUCB algorithm in \citep{neel2018mitigating} adds noise to $\bb_t$ and incurs additional regret in the order of $\tilde O(\log^{2.5}{T}\sqrt{T}/\epsilon)$ to achieve $\epsilon$-differential privacy, 
while our algorithm introduces additional regret in the order of $\tilde O(\log{T}\sqrt{T}/\epsilon)$ to achieve $(\epsilon, \delta)$-differential privacy ignoring logarithmic terms. The dependency of additional regret on $\delta$ is $O(\log(1/\delta))$. We note that our method leverages the  convergence of parameter estimation to inject less noise; and since it is a high probability analysis, the relaxed $(\epsilon, \delta)$-differential privacy notion is required. Later, we also validate our theoretical analysis of the regret reduction in our empirical evaluations. We currently cannot prove if our regret is optimal with respect to T, since the lower bound of the differentially private linear bandit problem is still an open problem. We note the lower bound discussed in \cite{shariff2018differentially} is not applicable, because their privacy definition includes context and it is different from ours that focuses on the privacy of reward. We consider deriving the lower bound of this problem as an important future direction.

In general, the regret bound of contextual bandits could be categorized into gap-independent bound and gap-dependent bound. Our provided analysis is for a gap-independent bound and it is dominated by the term $O(\sqrt{T})$ after ignoring the logarithmic terms. The gap-dependent regret bound of a bandit algorithm is usually in the order of $O(\log T /\mu_{1,k})$, where $\mu_{1,k}$ is the minimal reward difference between the best arm and any sub-optimal arm $k$. We leave the analysis of gap-dependent bound of our proposed algorithm as future work. 

Note that Algorithm~\ref{alg:plinucb} takes $L$ and $\lambda_0$ as input. From a theoretical perspective, the knowledge of the maximum L2-norm of context features $L$ is a commonly made assumption in linear bandits, which is required to derive important model parameters such as the size of confidence ellipsoid $\alpha_t$ in LinUCB~\cite{Improved_Algorithm}. From a practical view, one can normalize the context feature vectors or observe them ahead of the time to empirically estimate $L$ and $\lambda_0$.

When the context vectors are not generated from the required random process with minimal eigenvalue $\lambda_0$, we can add an additional Gaussian perturbation on the context to achieve this condition similar to the idea in \citet{kannan2018smoothed}. This leads to the following auxiliary result.
\begin{corollary}\label{corollary:eigen}
Following the assumptions in Lemma 1 and 2 about the context vectors, by perturbing every context vector $\bx_a$ with $z\sim \mathcal{N}(0, \sigma^2\bI) $, Algorithm \ref{alg:plinucb} is $(\epsilon, \delta)$-differentially private. The pseudo-regret of Algorithm \ref{alg:plinucb} up to time $T$ can be bounded by
\small
\begin{align}
Regret(T) \leq& 2\frac{\sqrt{d\log{\frac{1+TL^2\lambda}{\zeta}}}+\sqrt{\lambda}S}{\sigma^2}\sqrt{dT\log{(\lambda+\frac{LT}{d})}} \nonumber\\
&+ \left(2\frac{3^{1.5}L}{e^{1.5}\sigma^2\epsilon}\log({\frac{1}{\zeta}}) \sqrt{dT\log{(\lambda+\frac{LT}{d})}} + 32\log(1/\delta)/\sigma^2 \right)
\end{align}
\normalsize
with probability at least $1-2\zeta$.

\end{corollary}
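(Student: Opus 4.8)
The plan is to reduce the corollary to Theorem~\ref{theorem:regret} by showing that the Gaussian perturbation manufactures the minimal-eigenvalue condition of Assumption~\ref{assumption:context} with $\lambda_0=\sigma^2$, and then to re-run the regret accounting while tracking the gap between the perturbed contexts the algorithm actually operates on and the true contexts against which the pseudo-regret is measured. First I would verify that replacing each $\bx_a$ by $\tilde\bx_a=\bx_a+z$, with $z\sim\mathcal{N}(0,\sigma^2\bI)$ drawn independently, yields a context process whose second-moment matrix satisfies $\mathbb{E}[\tilde\bx\tilde\bx^\mt]=\mathbb{E}[\bx\bx^\mt]+\sigma^2\bI\succeq\sigma^2\bI$, since the cross terms vanish by independence and zero mean of $z$. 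Invoking Lemma~3.7 of \citep{kannan2018smoothed} certifies that the minimal eigenvalue is $\Omega(\sigma^2)$, so Assumption~\ref{assumption:context} holds on the perturbed stream with $\lambda_0=\sigma^2$ even when the original contexts are adversarial. Consequently Lemma~\ref{lemma:sensitivity}, Theorem~\ref{theorem:utility}, and Lemma~\ref{lemma:cb} all apply to the perturbed run with $\lambda_0$ instantiated as $\sigma^2$.

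The privacy assertion is then essentially immediate. The perturbation $z$ is drawn independently of the reward sequence $S$, so it is reward-independent preprocessing of the (public) contexts; it therefore does not alter the adjacency structure that the sensitivity analysis rests on. The bound of Lemma~\ref{lemma:sensitivity} and hence Theorem~\ref{theorem:privacy} continue to hold with $\lambda_0=\sigma^2$, keeping $\hat\btheta^p_{t+1}$ $(\epsilon,\delta)$-differentially private, and post-processing invariance gives that the selected-arm sequence is $(\epsilon,\delta)$-differentially private with respect to $S$, exactly as in the Claim.

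For the regret I would follow the proof of Theorem~\ref{theorem:regret} with two modifications. Because the algorithm forms $\bA_t,\bb_t$ from $\tilde\bx_{a_t}$ while the observed reward is $\bx_{a_t}^\mt\btheta^*+\gamma_t$, I would rewrite $r_t=\tilde\bx_{a_t}^\mt\btheta^*+(\gamma_t-z_t^\mt\btheta^*)$; since $z_t^\mt\btheta^*$ is sub-Gaussian, the effective feedback noise stays sub-Gaussian and the confidence-ellipsoid guarantee (Theorem~2 of \citep{Improved_Algorithm}) underlying Lemma~\ref{lemma:cb} is preserved with an inflated constant. Substituting $\lambda_0=\sigma^2$ into Theorem~\ref{theorem:regret} directly converts the privacy term $2\frac{3^{1.5}L}{e^{1.5}\lambda_0\epsilon}\log(\frac{1}{\zeta})\sqrt{dT\log(\lambda+\frac{LT}{d})}$ and the warm-up constant $32\log(1/\delta)/\lambda_0$ into their $\sigma^2$ counterparts. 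The leading non-private term then acquires its extra $1/\sigma^2$ factor when I convert the true-context pseudo-regret into quantities controlled on the perturbed stream: bounding $\sum_t\lVert\bx_{a_t}\rVert_{\bA_t^{-1}}$ via the perturbed elliptical potential and the lower bound $\lambda_{\min}(\bA_t)\geq\Omega(\sigma^2 t)$ introduces the eigenvalue in the denominator, in the same way $\lambda_0$ entered the original privacy term.

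The delicate step is the last one: cleanly controlling the discrepancy between the true-context pseudo-regret and the perturbed-context confidence widths, and verifying that the $\sigma^2$ from the restored eigenvalue propagates consistently through both the non-private and privacy contributions without leaving uncontrolled dependence on $\lVert z\rVert$. I expect this to be the main obstacle, since it is precisely the smoothed-analysis argument of \citep{kannan2018smoothed} that must be interfaced with the dynamic-sensitivity machinery; everything else is a mechanical substitution of $\lambda_0=\sigma^2$ into the already-established Lemma~\ref{lemma:sensitivity}, Theorem~\ref{theorem:utility}, Lemma~\ref{lemma:cb}, and Theorem~\ref{theorem:regret}.
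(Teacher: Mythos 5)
Your overall route matches the paper's: restore Assumption~\ref{assumption:context} with $\lambda_0=\Omega(\sigma^2)$ via the Gaussian perturbation, note that privacy is unaffected (the perturbation is reward-independent and Theorem~\ref{theorem:privacy} plus post-processing go through verbatim), and then substitute $\lambda_0=\sigma^2$ into the privacy term and the warm-up constant of Theorem~\ref{theorem:regret}. All of that is exactly what the paper does, and those parts of your argument are fine.

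The gap is in the one step you yourself flag as delicate: where the extra $1/\sigma^2$ on the \emph{leading} (non-private) term comes from. You propose to extract it from the elliptical-potential sum $\sum_t\lVert \bx_{a_t}\rVert_{\bA_t^{-1}}$ via $\lambda_{\min}(\bA_t)\geq\Omega(\sigma^2 t)$, "in the same way $\lambda_0$ entered the original privacy term." That mechanism does not work and is inconsistent with the shape of the claimed bound: Lemma~11 of \citep{Improved_Algorithm} bounds $\sum_t\lVert\bx_{a_t}\rVert^2_{\bA_t^{-1}}$ by $d\log(\lambda+\tfrac{LT}{d})$ with no dependence on $\lambda_{\min}$, and in the original privacy term $\lambda_0$ entered through the sensitivity $\Delta_t\leq 32L/(\lambda_0 t)$ (i.e., the noise scale), not through the potential. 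Moreover, if the $1/\sigma^2$ did come from the potential sum it would multiply \emph{both} terms of the regret, whereas in the corollary the factor $\sqrt{dT\log(\lambda+\tfrac{LT}{d})}$ is unchanged and only the confidence-ellipsoid factor is divided by $\sigma^2$. The ingredient you are missing is Lemma~3.3 of \citep{kannan2018smoothed}, which under the smoothed-context model gives $\lVert\hat\btheta_T-\btheta^*\rVert_{\bA_T}\leq\bigl(\sqrt{d\log\tfrac{1+TL^2\lambda}{\zeta}}+\sqrt{\lambda}S\bigr)/\sigma^2$; plugging this inflated confidence-ellipsoid bound into the decomposition of Theorem~\ref{theorem:regret} is precisely what produces the first term of the corollary, and it also absorbs the true-versus-perturbed-context discrepancy you were worried about. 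Without that citation (or an equivalent argument) your proof of the regret bound does not close.
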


Here we notice that if we do not have the assumption on the generation of context vectors, while instead perturbing the context vectors to guarantee Lemma \ref{lemma:sensitivity}, the regret would be larger than the one in Theorem \ref{theorem:regret}, especially when variance $\sigma^2$ is small. However note that the order of the regret in $T$ is still the same. The proof of this corollary can be found in the appendix.

\begin{figure*}[t]
\centering
\begin{tabular}{ >{\centering\arraybackslash}m{4.5cm} >{\centering\arraybackslash}m{4.5cm}
>{\centering\arraybackslash}m{4.5cm}}
\hspace{-0.4cm}
\includegraphics[width=4.8cm]{./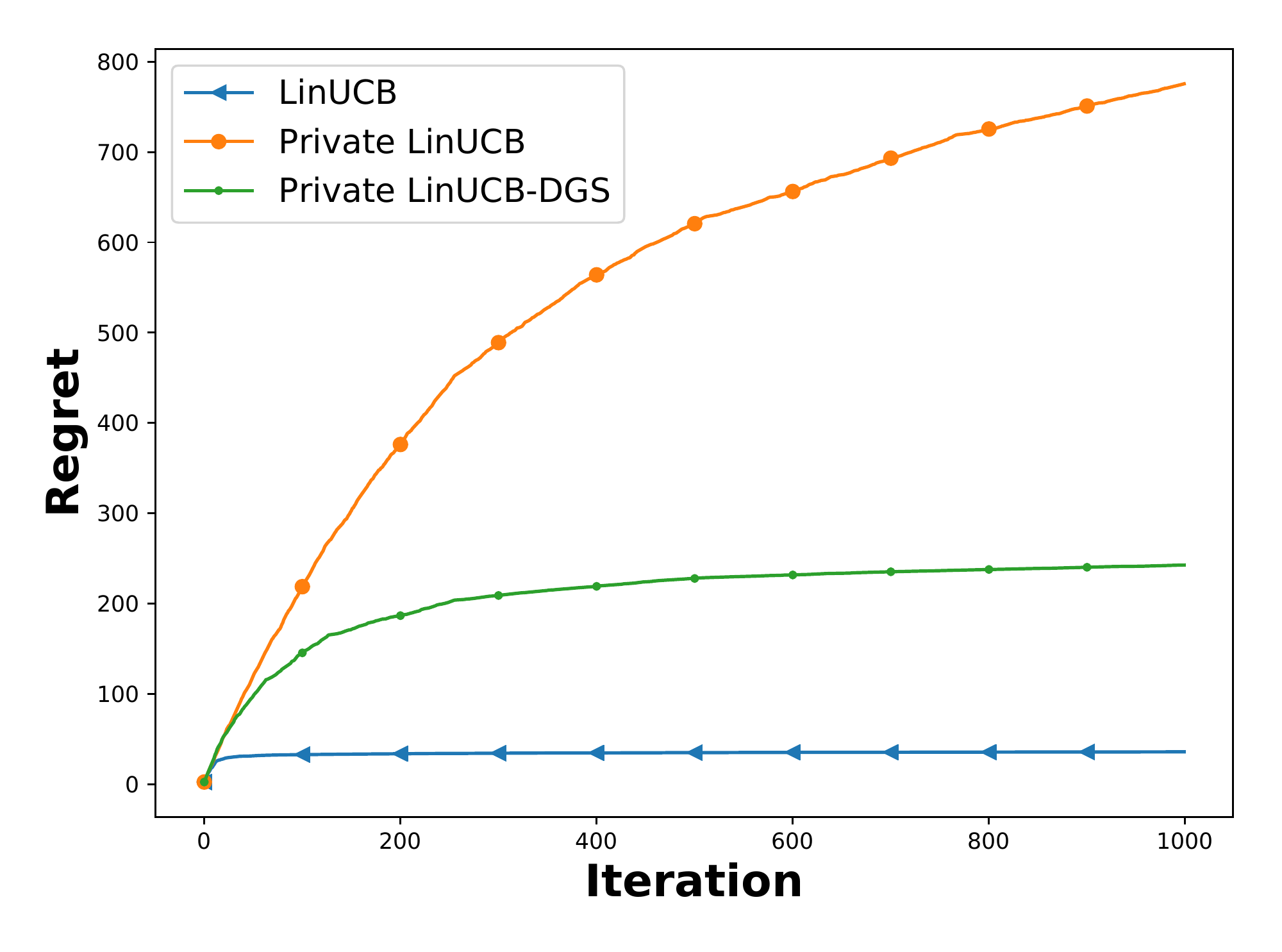} & 
\includegraphics[width=4.8cm]{./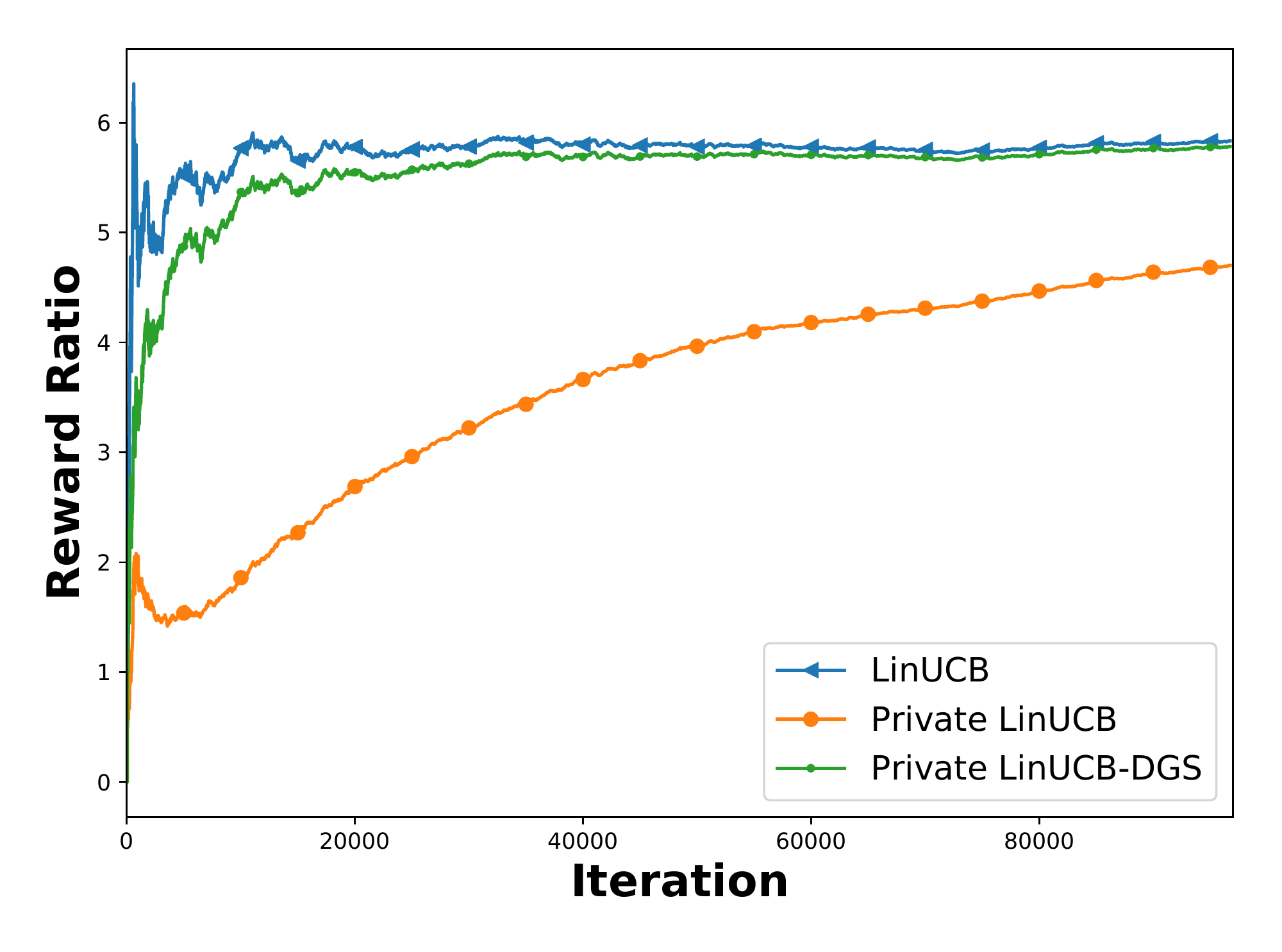}  &
\includegraphics[width=4.8cm]{./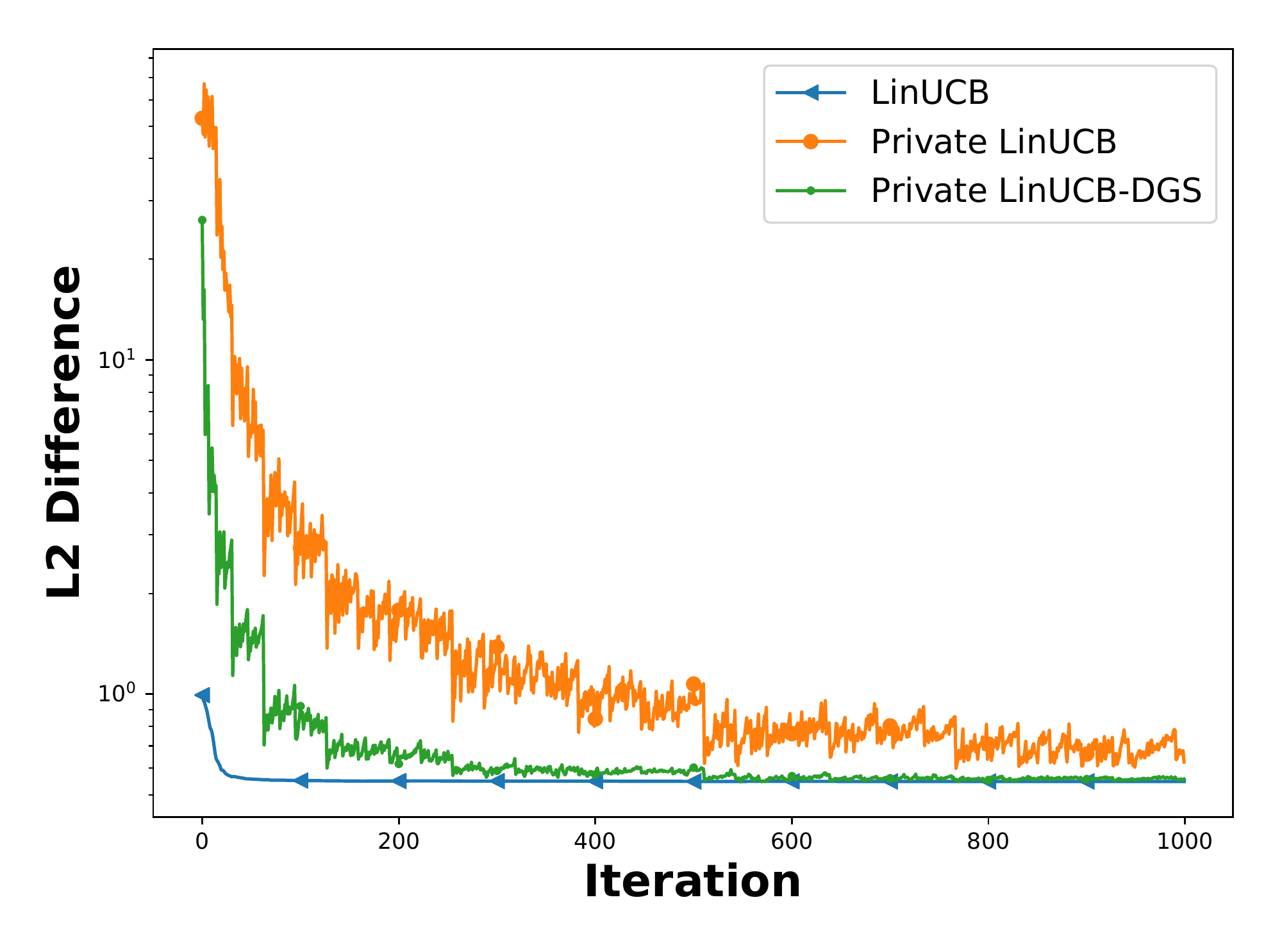} \\
(a) Cumulative regret on simulation data. & (b) Cumulative reward on LastFM.  &  (c) Parameter estimation error on simulation data.
\end{tabular}
\caption{Regret and parameter estimation quality on simulation and real-world dataset.} \label{fig:regret}
\vspace{-2mm}
\end{figure*}

\section{Experiment}\label{sec:exp}

We performed empirical evaluations of our proposed private LinUCB with dynamic global sensitivity (denoted as Private LinUCB-DGS) against two baseline algorithms:
\begin{itemize}
\item LinUCB \citep{LinUCB}: it selects an arm based on its upper confidence bound of the estimated reward with given context vectors. As a non-private algorithm, LinUCB does not inject any noise to its parameter estimation.
\item Private LinUCB \citep{neel2018mitigating}: it adds noise to $\bb_t$ using the tree-based mechanism with a constant sensitivity parameter, and incurs additional regret at the order of $\tilde O(\log^{2.5}{T}\sqrt{T}/\epsilon)$. 
\end{itemize}

\noindent\textbf{$\bullet$ Experiment setup.}
In our simulation-based study, we generate a size-$K$ ($K=1000$) arm pool $\cA$, in which each arm $a$ is associated with a $d$-dimensional ($d=10$) feature vector $\bx_a \in \mathbb{R}^d$ with $\lVert \bx_{a}\rVert_2 \leq 1$. Similarly, we create a set of ground-truth bandit parameters $\btheta^* \in \mathbb{R}^d$ with $\lVert \btheta^* \rVert_2 \leq 1$, which are not disclosed to the learners. Each dimension of both $\bx_a$ and $\btheta^*$ is drawn from a uniform distribution $U(0,1)$. At each round $t$, a randomly-sampled decision set with 10 arms from $\cA$ are disclosed to the learners for selection. The ground-truth reward $r_{a}$ is corrupted by Gaussian noise $\gamma \sim N(0,\sigma^2)$ before giving back to the learners and the standard deviation $\sigma$ is set to 0.5. 

We also experimented on a real-world dataset extracted from the music streaming service website Last.fm \footnote{\url{http://www.last.fm}}. The LastFM datasets is published by the HetRec 2011 workshop \footnote{http://grouplens.org/datasets/hetrec-2011}. This dataset contains 1,892 users and 17,632 items (artists). We used the information of ``listened artists'' of each user to create payoffs of recommendation candidates: if a user listened to an artist at least once, the payoff is 1, otherwise 0. Following the settings in \citep{Gang, wu2016contextual,wang2016learning}, we extracted the context features of artists and pre-processed the dataset in order to fit them into a contextual bandit setting. Specifically, we create a TF-IDF feature vector using associated tags (6,036 tags in total), which uniquely represents the context of that artist. PCA is used to reduce the dimensionality of the feature vectors to $d =25$. For a particular user $i$, we generate the candidate arm pool with size $|\cA_t|=25$ by first sampling one item from those non-zero reward candidate in user $i$'s past history, and then randomly fulfill the other $24$ from those zero-reward candidates from this user.

Notice that both synthetic data and real-world data met Assumption~\ref{assumption:context}. In synthetic data, the context features are sampled from a uniform distribution. For the LastFM dataset, the context vectors are generated by retaining the first 25 principal components of the original 6,036-dimension TD-IDF representation of items, which means the minimum eigenvalue of $\mathbb{E}[\mathbf{x}\mathbf{x}^\mt]$ is lower bounded.

\begin{figure*}[t]
\centering
\vspace{-2mm}
\begin{tabular}{ >{\centering\arraybackslash}m{4.6cm} >{\centering\arraybackslash}m{4.6cm}
>{\centering\arraybackslash}m{4.6cm}}
\hspace*{-0.4cm}
\includegraphics[width=5.6cm]{./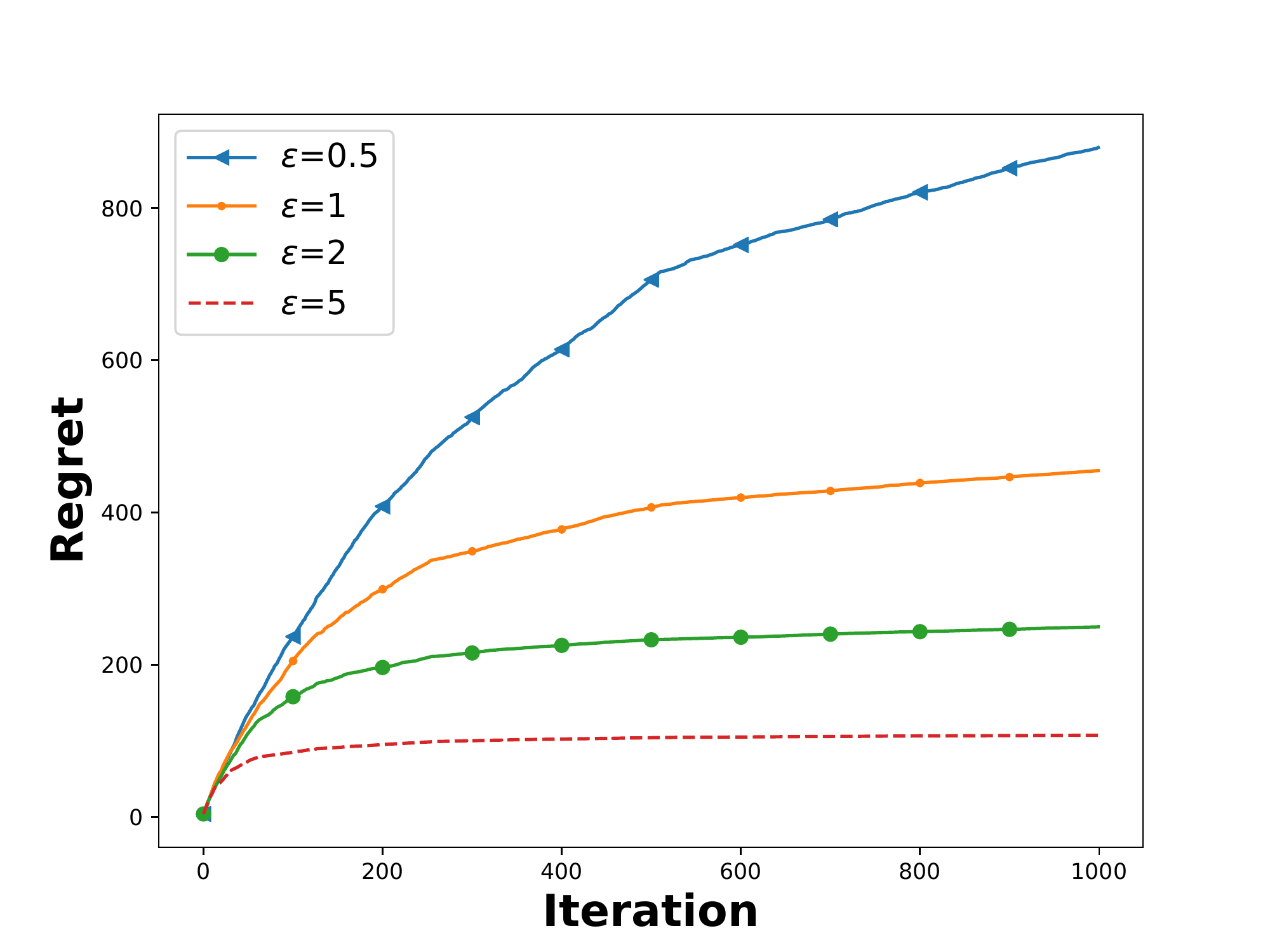}& 
\includegraphics[width=5.6cm]{./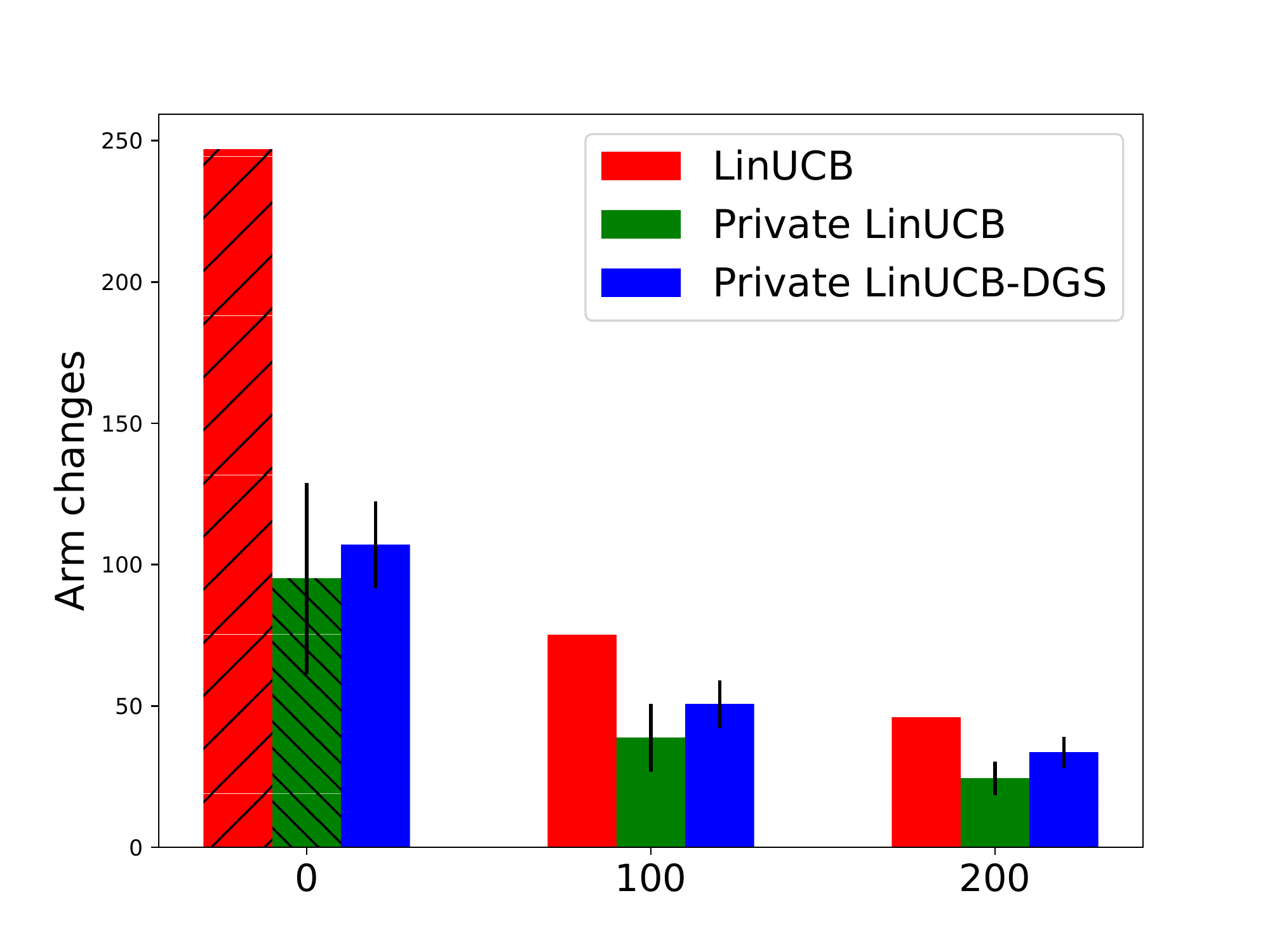}&  
\includegraphics[width=5.6cm]{./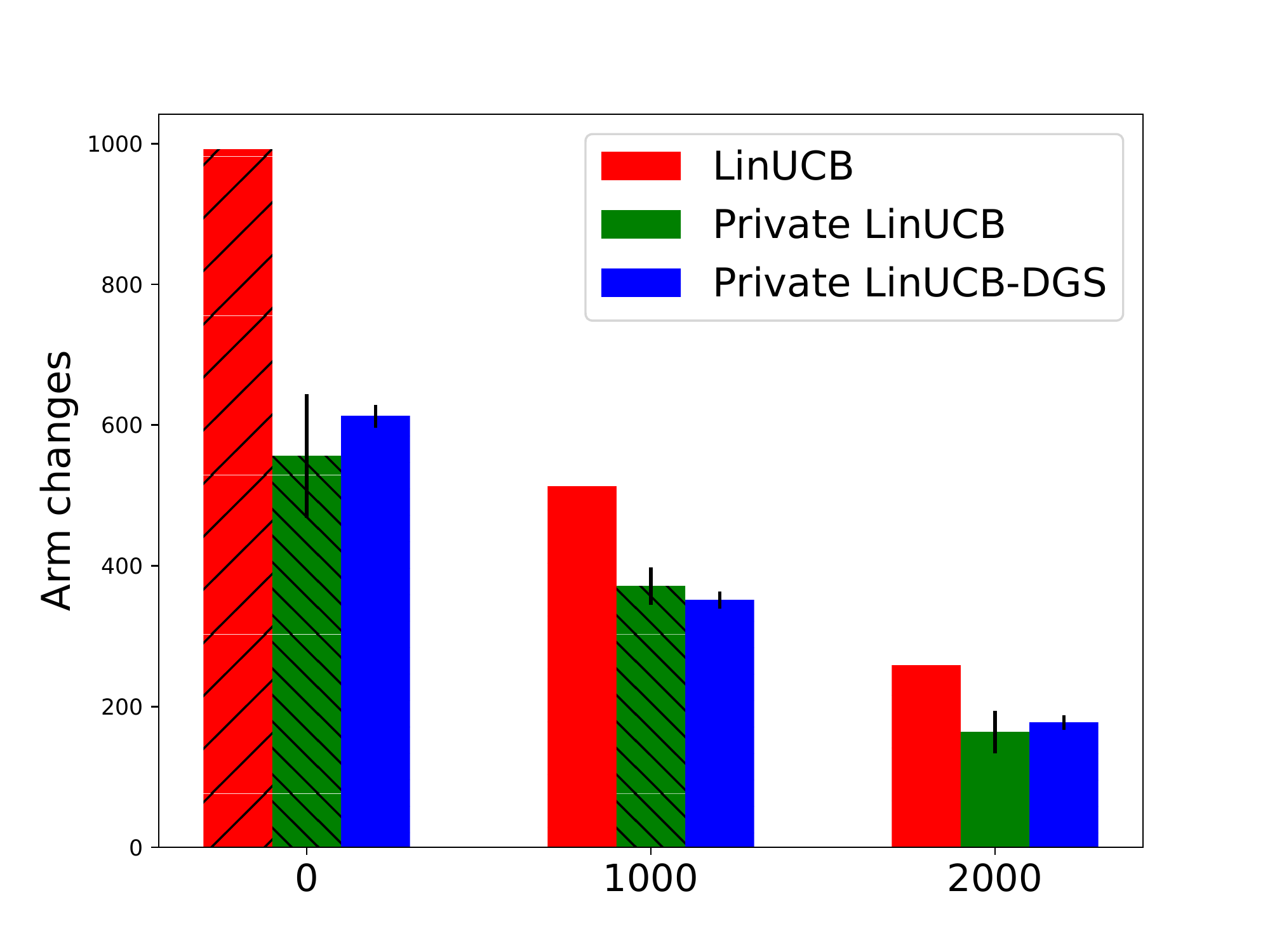}
 \\
(a) Effect of privacy level $\epsilon$. & (b) Change of selected arms on simulation with single reward difference. & (c) Change of selected arms on LastFM with single reward difference.
\end{tabular}
\caption{Detailed empirical analysis of privacy-preserving mechanism for LinUCB.} \label{fig:detail}
\vspace{-3mm}
\end{figure*}

\noindent\textbf{$\bullet$ Regret comparison.}
We first show the regret on simulation and real-world dataset in Figure \ref{fig:regret} (a) and (b). As in the real-world dataset we do not have an oracle policy, we instead report each learning algorithm's cumulative reward for comparison. We set the privacy level $\epsilon=2$ and  $\delta=0.1$ 
in the following experiments as default. On the LastFM dataset, we report the reward ratio normalized by the reward collected from a random policy; and the resulting performance curve is thus the higher the better. We observe that the non-private LinUCB performs better than its two private variants, since no noise is injected to LinUCB's model estimation. Our private LinUCB-DGS performs much better than private LinUCB on both datasets, as less noise is added .
These results support our theoretical analysis that with dynamic global sensitivity, our algorithm adds additional regret 
in the order of $\tilde O(\log{T}\sqrt{T}/\epsilon)$. 
Note that because the privacy notion are different in private LinUCB and our private LinUCB-DGS, here we focus on comparing the utility of these algorithms with the same $\epsilon$.

\noindent\textbf{$\bullet$ Parameter estimation quality.}
Figure \ref{fig:regret} (c) shows the parameter estimation quality of the three bandit algorithms, i.e., the L2 difference between the estimated bandit parameter $\hat\btheta^p_t$ and the ground-truth parameter $\btheta^*$. Compared with private LinUCB, our model parameter $\hat\btheta^p$ converges faster, which explains its improved performance in regret. We also notice that the convergence of private LinUCB oscillates more seriously. This is because private LinUCB injects noise with a larger variance (as we always introduce zero-mean Laplace noise) to model estimation. As a result, its quality of model parameters estimation varies significantly during online update, which directly leads to its worse regret. 

\noindent\textbf{$\bullet$ Effect of privacy level $\epsilon$.} In Figure \ref{fig:detail} (a), we show the regret of our algorithm with different privacy parameter $\epsilon$. We vary $\epsilon$ from $0.5$ to $5$. From the results, we notice a clear trade-off between the required privacy level $\epsilon$ and the resulting regret. Stronger privacy requirement (i.e., a smaller $\epsilon$) requires the privacy mechanism to inject more noise, which results in larger regret. This result also supports our theoretical analysis that in our solution is in the order of $O(\frac{1}{\epsilon})$.

\noindent\textbf{$\bullet$ Sensitivity in arm selection.} We now evaluate the algorithm's sensitivity to reward changes, which is exactly what differential privacy intends to protect in a bandit algorithm. Specifically, we fix the sequence of context and arms and only change reward feedback on one particular arm. Note that the feedback noise in simulation is also fixed for all arms ahead of time in this experiment. We compare how many arms are selected differently after the reward change by a bandit algorithm. In our simulation based study, we run each algorithm 500 iterations and change the reward at iteration $\{0, 100, 200\}$ by increasing the original reward by 0.5. For experiment on LastFM, we run 5,000 iterations, and flip reward on the selected item at iteration $\{0, 1000, 2000\}$. We run private LinUCB-DGS and private LinUCB-DGS for 5 times and report the mean and standard deviation. We run LinUCB just one time as it is a deterministic algorithm.

From Figure \ref{fig:detail} (b) and (c) we can observe that the number of changed arms in our solution is clearly smaller than its non-private counterpart, which suggests that private LinUCB-DGS is less sensitive to reward change comparing to original LinUCB. Private LinUCB has a similar number of changed arms than ours in average but with a larger variance. 
This suggests less stable utility provided by the private LinUCB solution. We also notice that when the reward change happens later, the change in all algorithms' arm selection gets smaller, which supports our motivation of dynamic sensitivity analysis that the algorithm's output is less sensitive to its input in the later stage.

\section{Conclusions \& Future Work}

In this paper, we first studied the dynamic nature of sensitivity of linear bandit models and presented a dynamic global sensitivity analysis of such an algorithm under the tree based mechanism, which leads to reduced noise addition for differential privacy. We then developed and analyzed a differentially private linear bandit algorithm based on the concept of dynamic global sensitivity. Our private linear bandit algorithm injects additional regret caused by privacy-preserving mechanism 
in $\tilde O(\log{T}\sqrt{T}/\epsilon)$ while guarantees $(\epsilon,\delta)$-differential privacy. Experimental results on both synthetic and real-world datasets demonstrated the advantage of our algorithm and supported our theoretical analysis.

One important property of bandit algorithms is their intrinsic noise/randomness introduced by exploration, such as the stochastic sampling of arms in Thompson Sampling and EXP3. Less explicit noise should be needed for differential privacy, because of such intrinsic randomness of an algorithm's output, e.g., free privacy. We plan to take advantage of it for better balancing privacy and utility in bandit algorithms. In addition, as we mentioned before our proposed dynamic global sensitivity analysis is not only limited to linear bandit algorithms. It is meaningful to explore its application in a broader context, such as logistic bandits and online convex optimization. We also notice that the lower bound of the differentially private linear bandit problem that protects privacy of reward is currently still an open problem, and it is important to investigate this lower bound to show the optimality of a private linear bandit algorithm.  

\begin{acks}
We thank the anonymous reviewers for their insightful comments. This work was supported by National Science Foundation Award IIS-2128019, IIS-2007492 and IIS-1553568, Google Faculty Research Award and Bloomberg Data Science Ph.D. Fellowship.

\end{acks}


\appendix
\section*{Appendix}
\subsection*{Missing Proofs}
In this section, we provide the detailed proofs of the lemma and theorems discussed in our paper. 

\begin{proof}[Proof of Lemma \ref{lemma:sensitivity}]
For any neighbouring reward sequences $\{S, S'\}$ that only differ at the data point $j$, i.e., $r_i = r'_{i}$ when $i\neq j$, denote $\bA_t = \lambda \bI+ \sum_{i=1}^{t} \bx_{a_i} \bx_{a_i}^\mt$, we have, 
\begin{align*}
&\lVert \hat\btheta_{t+1} - \hat\btheta'_{t+1}\rVert_2 \\=& \lVert (\lambda \bI+\sum_{i=1}^{t} \bx_{a_i} \bx_{a_i}^\mt)^{-1} \sum_{i=1}^{t} \bx_{a_i} r_i - (\lambda \bI+\sum_{i=1}^{t} \bx_{a_i} \bx_{a_i}^\mt)^{-1} \sum_{i=1}^{t} \bx_{a_i} r'_i  \rVert_2 \\
=& \lVert \bA_t^{-1} \bx_{a_i} (r_j - r'_j)  \rVert_2 \\
\leq& 2L\lambda_{max}(\bA_t^{-1})\\
\leq& \frac{2L}{\lambda_{min}(\bA_t)}
\end{align*}
$\lambda_{min}(\cdot)$ denotes the minimal eigenvalue of the input matrix. The third step is because of the assumption $\lVert\bx\rVert_2 \leq L$ and $\lVert r \rVert \leq 1$. To analyze the eigenvalue of $\bA_t$, we adopt the assumption from Theorem 1 of Gentile et al. \cite{gentile2014online} that context vectors $\{\bx_{1,t},..\bx_{K,t}\}$ are  i.i.d. conditioned on the algorithm's past actions and observed context. Based on Theorem 1 of \cite{gentile2014online}, $\lambda' = \lambda_0 t/4 - 8 \log((t+3)/\delta) -2\sqrt{t \log((t+3)/\delta)}$ and $\lambda' = \lambda_0 t/16$ when $t>32\log(1/\delta)/\lambda_0$. Thus we have $\lambda_{min}(\bA_t) \geq \lambda_0 t / 16$ when $t>32\log(1/\delta)/\lambda_0$. Substitute it back to the inequality and we finish the proof of Lemma \ref{lemma:sensitivity}.
\end{proof}

\begin{proof}[Proof of Theorem \ref{theorem:privacy}]
We can rewrite the estimated model parameter $\hat\btheta_t$ as sum statistics using the Sherman–Morrison formula by
\begin{align*}
\hat\btheta_{t+1} &= (\bA_t+\bx_{a_t}\bx_{a_t}^\mt)^{-1}(\bb_t+\bx_{a_t}r_{a_t}) \\
&=\bA_t^{-1}\bb_t + \Delta_{\bA_t} \bb_t + \bA_t^{-1} \bx_{a_t}r_{a_t} + \Delta_{\bA_t} \bx_{a_t}r_{a_t}\\
&=\hat\btheta_{t}+\Delta_{\bA_t} \bb_t + \bA_t^{-1} \bx_{a_t}r_{a_t} + \Delta_{\bA_t} \bx_{a_t}r_{a_t}\\
&:= \hat\btheta_{t} + \Delta(\hat\btheta_{t})
\end{align*}
where $\Delta_{\bA_t} = \frac{\bA_t^{-1}\bx_{a_t}\bx_{a_t}^\mt \bA_t^{-1}}{1 + \bx_{a_t}^\mt \bA_t^{-1}\bx_{a_t}}$. Note that we do not actually need to store nor calculate the partial sum in the tree; but we only use the tree mechanism to sample noise for each partial sum, which makes our implementation efficient. We notice that not saving partial sums on the tree makes our algorithm fail to satisfy the definition of \emph{pan privacy} \citep{dwork2010differential}. However if needed, one can always explicitly maintain a tree for $\Delta(\hat\btheta_{t})$ to achieve pan privacy, with a price of added computational cost.

For partial sum $\sum_{t=i}^j\Delta(\hat\btheta_{t})$, i.e., $\hat\btheta_{j}-\hat\btheta_{i}$, its sensitivity is bounded by $\hat\btheta_{j} - \hat\btheta'_{j}$ and further bounded by $\Delta_j$ as discussed in Lemma \ref{lemma:sensitivity}.
According to the definition of differential privacy, the partial sum is $(\epsilon', \delta')$ -differentially private if we add $\text{Lap}(\frac{\Delta_{j}}{\epsilon'})$ noise with our sensitivity bound that holds with probability at least $1-\delta'$.

We note that the incremental part $\Delta(\hat\btheta_{t})$ depends on not only  incoming $(\bx_{a_t}, r_{a_t})$, but also the historical statistics summarized in $\bA_t$. A similar problem also occurs in differentially private online convex optimization problem, where the current gradient depends on the choice of past gradients. We refer to Theorem 3 of \cite{thakurta2013nearly}, which uses the advanced composition theorem, to prove that the composition of $(\epsilon'=\epsilon/(\log T), \delta'=\delta/(\log T),)$-differentially private partial sums achieves $(\epsilon, \delta)$-differential privacy. 
\end{proof}
We clarify the analogy of problem structure between privacy of Online Convex Optimization (OCO) discussed in \cite{thakurta2013nearly} and privacy of contextual bandits as follows: our $\Delta\theta$ could be viewed similarly as gradient in OCO, which is protected by private sum statistics releasing. In OCO, gradient depends on current loss and historical evaluated parameters $\{w_t\}^T_{t=1}$, while in contextual bandits  $\Delta\theta$ depends on current reward and historical pulled arms $\{x_t\}^T_{t=1}$. An arm pulled in contextual bandits is analogous to parameter evaluated in OCO, and thus similar arguments and analysis apply.

\begin{proof}[Proof of Theorem \ref{theorem:utility}]
To bound the total amount of noise $\eta_t$ added by our tree mechanism, we first state the property of sum of independent Lapalace distributions (also stated in Lemma 2.8 of \cite{chan2011private}):
\end{proof}

\begin{lemma}
\label{lemma:laplace} 
With probability $1-\zeta$, sum of independent Lapalace distributions $\text{Lap}(a_i)$ is bounded by $O(\sqrt{\sum_i a_i^2}\log{\frac{1}{\zeta}})$
\end{lemma}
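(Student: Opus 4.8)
The plan is to treat $Y = \sum_i X_i$ with $X_i \sim \text{Lap}(a_i)$ independent as a sum of independent sub-exponential random variables and to apply a Chernoff/Bernstein argument; this is exactly the route behind Lemma 2.8 of \cite{chan2011private}. First I would recall that the Laplace distribution has moment generating function $\mathbb{E}[e^{s X_i}] = (1 - a_i^2 s^2)^{-1}$ for $|s| < 1/a_i$, so it is sub-exponential rather than sub-Gaussian. Using the elementary inequality $-\log(1-u) \le 2u$, valid for $u \in [0, 1/2]$, I would show that whenever $|s| \le 1/(2\max_i a_i)$ one has $\mathbb{E}[e^{s X_i}] \le \exp(2 a_i^2 s^2)$, since then $u = a_i^2 s^2 \le 1/4$. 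By independence this gives the joint bound $\mathbb{E}[e^{s Y}] \le \exp(2 \nu^2 s^2)$ on the same range of $s$, where $\nu^2 \defeq \sum_i a_i^2$.

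Second, I would run the standard Chernoff optimization $\mathbb{P}(Y \ge \tau) \le \exp(-s\tau + 2\nu^2 s^2)$ over $s \in (0, 1/(2\max_i a_i)]$. The unconstrained minimizer $s^\star = \tau/(4\nu^2)$ is admissible precisely when $\tau \le 2\nu^2/\max_i a_i$, and yields the sub-Gaussian tail $\exp(-\tau^2/(8\nu^2))$; in the complementary regime $\tau > 2\nu^2/\max_i a_i$ I would instead plug in the boundary value $s = 1/(2\max_i a_i)$ to obtain the sub-exponential tail $\exp(-\tau/(4\max_i a_i))$. Combining the two regimes and adding the symmetric estimate for the lower tail gives a Bernstein-type bound $\mathbb{P}(|Y| \ge \tau) \le 2\exp\!\big(-c\,\min(\tau^2/\nu^2,\ \tau/\max_i a_i)\big)$ for an absolute constant $c > 0$.

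Finally, I would set the right-hand side equal to $\zeta$ and invert. The sub-Gaussian branch contributes $\tau = O(\nu\sqrt{\log(1/\zeta)})$ and the sub-exponential branch contributes $\tau = O(\max_i a_i \cdot \log(1/\zeta))$, so with probability at least $1-\zeta$ we get $|Y| = O\big(\nu\sqrt{\log(1/\zeta)} + \max_i a_i\,\log(1/\zeta)\big)$. To reach the stated form I would use $\max_i a_i \le \sqrt{\sum_i a_i^2} = \nu$ together with $\sqrt{\log(1/\zeta)} \le \log(1/\zeta)$, which holds once $\log(1/\zeta) \ge 1$, to absorb both terms into $O(\sqrt{\sum_i a_i^2}\,\log(1/\zeta))$, as claimed.

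I expect the main obstacle to be the two-regime Bernstein analysis: the restriction $|s| < 1/a_i$ on the domain of the Laplace MGF is what prevents a pure sub-Gaussian bound and forces the extra $\max_i a_i\,\log(1/\zeta)$ tail. Keeping track of this constraint while optimizing over $s$, and verifying that the boundary substitution in the large-$\tau$ regime still yields an exponentially small probability, is the only genuinely delicate part; the final simplification that collapses both branches into the single $\log(1/\zeta)$ factor is routine.
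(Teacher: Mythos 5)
Your proof is correct: the MGF bound $\mathbb{E}[e^{sX_i}]=(1-a_i^2s^2)^{-1}\le\exp(2a_i^2s^2)$ on the restricted range, the two-regime Chernoff optimization, and the final absorption of the $\max_i a_i\log(1/\zeta)$ term via $\max_i a_i\le\sqrt{\sum_i a_i^2}$ and $\sqrt{\log(1/\zeta)}\le\log(1/\zeta)$ (for $\zeta\le 1/e$, which is all a big-$O$ statement needs) all check out. The paper itself gives no proof of this lemma and simply cites Lemma 2.8 of \cite{chan2011private}; your derivation is essentially the same measure-concentration argument used there, so there is nothing further to reconcile.
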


Let $b$ be the $\ceil{\log_2 T}+1$-bit binary representation of $t$. The noise introduced by tree-based mechanism with dynamic global sensitivity can be bounded by
\vspace*{-6pt}
\begin{align*}
\eta_t &= \sum_{b_i = 1} \text{Lap}(\frac{\Delta_{t-2^i+1}}{\epsilon'})\\
&= \sum_{b_i = 1} \text{Lap}(\frac{\Delta_{t-2^i+1}\log T}{\epsilon})\\
&\leq \log{\frac{1}{\zeta}} \sqrt{\frac{\log^2 T}{\epsilon^2}\left(\Delta^2_{\frac{1}{2}t'+1} + \Delta^2_{\frac{3}{4}t'+1} +\Delta^2_{\frac{7}{8}t'+1} + ... + \Delta^2_{t'}) \right)}\\
&\leq\frac{32L}{\lambda_0}\log{\frac{1}{\zeta}} \frac{\log T}{t\epsilon}\sqrt{\left((\frac{2}{1})^2 + (\frac{4}{3})^2 + ... \right)}\\
&=\frac{32L}{\lambda_0}\log{\frac{1}{\zeta}} \frac{\log T}{t\epsilon}\sqrt{\left((1 +\frac{1}{1})^2 + (1 + \frac{1}{3})^2 + ... \right)}\\
&\leq\frac{32L}{\lambda_0}\log{\frac{1}{\zeta}} \frac{\log T}{t\epsilon}\sqrt{2\log t + 2}
\vspace*{-6pt}
\end{align*} 
where $t'>t$ is the smallest number that is the power of 2 after $t$. The last inequality holds because there are at most $\log t$ terms in the summation, and $(\frac{1}{1})^2+(\frac{1}{3})^2+..$ is upper bounded by 2.
As a result, we conclude that the added noise $\eta_t$ is in the order of $O(\frac{L}{\epsilon}\log T \sqrt{\log t}\log{\frac{1}{\zeta}}/t )$ with probability $1-\zeta$.

\begin{proof}[Proof of Theorem \ref{theorem:regret}]
We first bound the one-step regret at time $t$ as,
\vspace*{-6pt}
\begin{align*}
\textit{regret}(t) &= \bx^{\mt}_{a^*_t}\btheta^* - \bx^{\mt}_{a_t}\btheta^* ~~// \text{Definition of regret}\\
&\leq \bx^{\mt}_{a^*_t}\hat\btheta^p_t + \text{CB}_t(\bx_{a^*_t}) - \bx^{\mt}_{a_t}\btheta^*  ~~// \text{Definition of confidence bound} \\
&\leq \bx^{\mt}_{a_t}\hat\btheta^p_t + \text{CB}_t(\bx_{a_t}) - \bx^{\mt}_{a_t}\btheta^* ~~// \text{Arm selection strategy} \\
&\leq 2\text{CB}_t(\bx_{a_t})  
\end{align*}
The pseudo-regret up to time $T$ is thus bounded by,
\begin{align*}
\text{Regret}(T) &= \sum_{t=1}^T regret(t) \\
&\leq \sum_{t=1}^T 2\text{CB}_t(\bx_{a_t}) \\
&= 2\sum_{t=1}^T \alpha_t \lVert \bx_{a_t} \rVert_{\bA^{-1}_t} \\
&\leq 2\sum_{t=1}^T \left( \lVert\hat\btheta_t^p - \hat\btheta_t\rVert_{\bA_t} + \lVert\hat\btheta_t - \btheta^*\rVert_{\bA_t} \right)  \lVert \bx_{a_t} \rVert_{\bA^{-1}_t} \\
 &\leq 2\sqrt{\sum_{t=1}^T \lVert\hat\btheta_t - \btheta^*\rVert^2_{\bA_t} \sum_{t=1}^T  \lVert \bx_{a_t} \rVert^2_{\bA^{-1}_t}}\\ 
&+ 2\sqrt{\sum_{t=1}^T\lVert \eta_t \rVert^2_{\bA_t} \sum_{t=1}^T  \lVert \bx_{a_t} \rVert^2_{\bA^{-1}_t}} \\
 &\leq 2\sqrt{T \lVert\hat\btheta_T - \btheta^*\rVert^2_{\bA_T} \sum_{t=1}^T  \lVert \bx_{a_t} \rVert^2_{\bA^{-1}_t}}\\ 
&+ 2\sqrt{T \max_{t'}\lVert \eta_{t'} \rVert^2_{\bA_{t'}} \sum_{t=1}^T  \lVert \bx_{a_t} \rVert^2_{\bA^{-1}_t}}
\end{align*}

In the fourth step, we separate the upper bound of $\alpha_t$ into two terms, among which one is the confidence bound of the original LinUCB and the other is the bound of injected noise. The fifth step follows the Cauchy–Schwarz inequality. Using Lemma 11 of \cite{Improved_Algorithm}, we have $\sum_t \lVert \bx_{a_t} \rVert^2_{\bA^{-1}_t} \leq \sqrt{d\log{(\lambda+\frac{LT}{d})}} $.
The bound of $ \lVert\hat\btheta_T - \btheta^*\rVert^2_{\bA_T} \leq \sqrt{d\log{\frac{1+TL^2\lambda}{\zeta}}}+\sqrt{\lambda}S $ follows self-normalized bound for martingales of Lemma 9 in \cite{Improved_Algorithm}. We bound $\max_{t'}\lVert \eta_{t'} \rVert^2_{\bA_{t'}}$ when $\lambda \geq \lambda_0t/16$, i.e.,  $t > 32\log(1/\delta)/\lambda_0$, by taking the derivative of $\frac{\log^{1.5}t}{\sqrt{t}}$ and find its maximum value at $t=e^3$, since it is a concave function. Substitute it back and we get its bound of $\frac{3^{1.5}}{e^{1.5}}$. The regret for  $t\leq 32\log(1/\delta)/\lambda_0$ is at most $32L\log(1/\delta)/\lambda_0$.  Combining all these terms together, we prove the regret bound of our developed private LinUCB based on global dynamic sensitivity analysis. 
\end{proof}

\begin{proof}[Proof of Corollary \ref{corollary:eigen}]
When every context vector $\bx$ is perturbed by Gaussian noise $z\sim \mathcal{N}(0, \sigma^2) $, $\mathbb{E}[(\bx+z)(\bx+z)^\mt]$ has minimal eigenvalue $\lambda_0$ where $\lambda_0 \geq \Omega(\sigma^2)$. Thus Lemma 1 and 2 hold, and the privacy guarantee directly follows Theorem~\ref{theorem:privacy}.  Following Lemma 3.3 in \cite{kannan2018smoothed}, we have $ \lVert\hat\btheta_T - \btheta^*\rVert^2_{\bA_T} \leq \frac{\sqrt{d\log{\frac{1+TL^2\lambda}{\zeta}}}+\sqrt{\lambda}S}{\sigma^2} $ . Plug the result into the analysis of Theorem \ref{theorem:regret} and we have the regret bound.
\end{proof}

\balance
\bibliographystyle{ACM-Reference-Format}
\bibliography{main.bib}

\end{document}